\documentclass[sigconf]{acmart}

\usepackage{url}            
\usepackage{booktabs}       
\usepackage{amsfonts}       
\usepackage{nicefrac}       
\usepackage{microtype}      
\usepackage{xcolor}         
\usepackage{tcolorbox}
\usepackage{amsmath}
\usepackage{graphicx}
\usepackage{subfigure}
\usepackage{subcaption}
\usepackage{enumerate}
\usepackage{amsmath}
\usepackage{mathtools}
\usepackage{amsthm}
\usepackage[table]{xcolor}
\usepackage{algorithm}
\usepackage{algorithmic}
\usepackage{wrapfig}
\usepackage{caption}
\usepackage[normalem]{ulem}
\usepackage{multirow}
\usepackage{array}
\usepackage{multicol}
\usepackage{tikz}
\usepackage{makecell}
\theoremstyle{plain}
\newtheorem{theorem}{Theorem}[section]

\newtheorem{corollary}[theorem]{Corollary}
\theoremstyle{definition}
\newtheorem{definition}[theorem]{Definition}
\newtheorem{assumption}[theorem]{Assumption}
\theoremstyle{remark}

\definecolor{myred}{RGB}{243,125,119}
\definecolor{deepdeepred}{RGB}{107,0,12}
\definecolor{mydeepred}{RGB}{178,23,41}
\definecolor{mygreen}{RGB}{112,195,121}
\definecolor{myblue}{RGB}{0,148,254}
\definecolor{mypurple}{RGB}{128,128,254}
\definecolor{mydeepblue}{RGB}{117,112,179}

\definecolor{thered}{RGB}{247,105,108}   
\definecolor{thegreen}{RGB}{99,190,123}  
\definecolor{myred}{HTML}{D3393C}
\definecolor{mygreen}{HTML}{3A9D5D}

\AtBeginDocument{%
  }

\setcopyright{acmlicensed}
\copyrightyear{2018}
\acmYear{2018}
\acmDOI{XXXXX.XXXXX}
\acmConference[Conference acronym 'XX]{Make sure to enter the correct
  conference title from your rights confirmation email}{June 03--05,
  2018}{Woodstock, NY}

\acmISBN{978-1-4503-XXXX-X/2018/06}

\def\method{Nomad}

\begin{document}

\title{Relative Transitions, Not Absolute Destinations: A Transfer-and-Ground Framework for Target-Trajectory-Free Human Mobility Generation}

\author{Yidi Wang}
\email{wang.yidi@connect.um.edu.mo}
\affiliation{%
  \institution{SKL-IOTSC, Department of CIS, University of Macau}
  \state{Macau SAR}
  \country{China}
}

\author{Yunhe Zhang}
\affiliation{%
  \institution{SKL-IOTSC, Department of CIS, University of Macau}
  \state{Macau SAR}
  \country{China}}
\email{zhangyhannie@gmail.com}

\author{Bangchao Deng}
\affiliation{%
  \institution{SKL-IOTSC, Department of CIS, University of Macau}
  \state{Macau SAR}
  \country{China}}
\email{yc37980@um.edu.mo}

\author{Dingqi Yang}
\affiliation{%
  \institution{SKL-IOTSC, Department of CIS, University of Macau}
  \state{Macau SAR}
  \country{China}}
\email{dingqiyang@um.edu.mo}

\author{Pengyang Wang}
\correspondingauthor
\affiliation{%
  \institution{SKL-IOTSC, Department of CIS, University of Macau}
  \state{Macau SAR}
  \country{China}}
\email{pywang@um.edu.mo}

\renewcommand{\shortauthors}{Trovato et al.}

\begin{abstract}

Individual mobility trajectories support urban analysis and location-based services, yet most trajectory generators require observations from their deployment city.
This assumption excludes precisely the cities where trajectories are unavailable even though points of interest (POIs) and their attributes can be obtained from public maps.
We study \emph{target-trajectory-free generation}: learning from POIs and trajectories in source cities while utilizing only POI coordinates and categories in a target city, with no target trajectory or trajectory-derived statistic available for training, model selection, or generation.
Existing trajectory generators typically predict absolute destinations, entangling reusable movement behavior with city-specific POI identities and spatial layouts.
Our core insight is to replace this city-bound output with context-conditioned relative transitions.
We propose \textit{\method{}}, a transfer-and-ground framework that separates learning how people move from determining where those movements are realized.
Specifically, a history-conditioned flow-matching model learns from source trajectories a transition prior over semantic displacement between POI contexts, geographic displacement, and elapsed time; at inference, a behavior graph and an exploration--return walk ground sampled transitions onto the target POI map.
This factorization enables a direct test of representation level transferability without assuming invariance of the full mobility distribution.
Extensive experiments across ten cities and 14 transfers show that \textit{\method{}} outperforms adaptation baselines in trajectory fidelity and downstream utility,  lowering the average error over the best baseline of each metric by about 15\% in distributional fidelity and about 3\% in downstream utility.
Mechanism-level analyses further show that, under matched contexts, relative transitions contain less city-specific information and transition generation suffers less cross-city degradation than its absolute destination counterpart.
Together, these findings suggest that separating movement relations from city-specific place identities provides a more transferable basis for mobility synthesis in unseen cities without target-city trajectories.

\end{abstract}

\begin{CCSXML}
<ccs2012>
   <concept>
       <concept_id>10010147.10010341</concept_id>
       <concept_desc>Computing methodologies~Modeling and simulation</concept_desc>
       <concept_significance>500</concept_significance>
       </concept>
 </ccs2012>
\end{CCSXML}

\ccsdesc[500]{Computing methodologies~Modeling and simulation}
\received{20 February 2007}
\received[revised]{12 March 2009}
\received[accepted]{5 June 2009}

\keywords{Human Mobility; Transfer Learning; Simulation; Flow Matching}

\maketitle

\section{Introduction}

Individual mobility trajectories support transportation and travel-demand analysis \cite{wang2019urban}, epidemic simulation \cite{eubank2004modelling,chang2021mobility}, location-based services \cite{cho2011friendship,yang2014modeling,yin2014temporal}, and the study of urban function \cite{yuan2012discovering,liu2022developing}.
However, most trajectory generators assume that the deployment city already provides enough trajectories for training or adaptation.
A newly deployed location-based service can only obtain the city's map before it has accumulated local check-in histories.
Alternatively, target trajectories may be held by another platform or institution but remain inaccessible because of data privacy concerns~\cite{bindschaedler2016synthesizing}.
By contrast, public mapping platforms provide POI coordinates and categories without exposing individual movements \cite{haklay2008openstreetmap}.
This asymmetry motivates a different question from conventional in-city generation: \emph{can a source-trained model generate target-city trajectories given only the target POI map at deployment?}

We formalize these deployment conditions as a strict setting, namely \emph{target-trajectory-free}.
Here, \emph{target-trajectory-free} does not mean that no trajectories exist in the city, but that none are available to the model developer.
A model may leverage POIs and trajectories from one or more source cities.
For the target city, it is restricted to POI-relevant information such as coordinates and categories, but it may not access any target trajectories or trajectory-derived statistics.
The benchmark reserves target trajectories exclusively for final diagnosis and evaluation.
This information boundary distinguishes the problem from domain adaptation and few-shot transfer: the target city is visible as a set of places, but its mobility behavior is unavailable.

Existing methods do not naturally satisfy this boundary because they generate a city-specific object. In-city adversarial, imitation, and diffusion models learn directly from trajectories in the deployment city \cite{feng2020learning,choi2021trajgail,rao2020lstm,zhu2023difftraj,zhu2024controltraj,deng2025revisiting}.
Cross-city and meta-learning methods reduce the amount of target supervision, but typically retain target observations, a shared spatial grid, city-specific location embeddings, or a target output layer \cite{pan2019urban,jin2022selective,wang2018cross,wang2024cola,wang2025gtg}.
Their common prediction target is the next absolute destination \(p_k\), whose identity and empirical frequency are defined only within one city's POI vocabulary.
When target trajectories are absent, the problem is not merely that this distribution is estimated from fewer samples: the target-specific output probabilities have never been observed.

\begin{figure}[t]
\vskip -0.2in
\begin{center}
\centerline{\includegraphics[width=\linewidth]{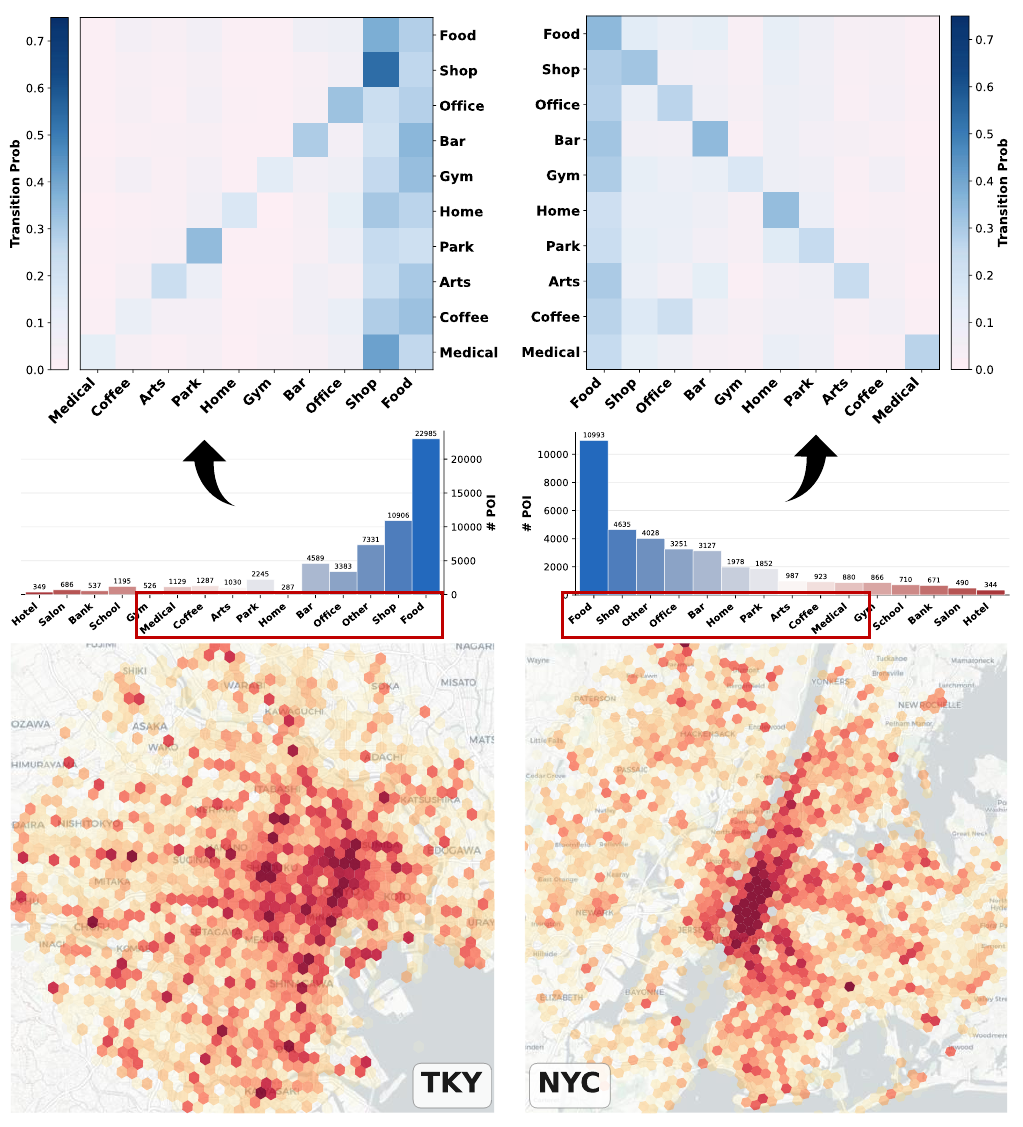}}
\vspace{-5pt}
\caption{NYC vs. TKY: same transitions, different POIs.
}
\label{fig:motivation}
\end{center}
\vskip -0.2in
\end{figure}

Human mobility nevertheless exhibits recurring statistics across cities, including heavy-tailed travel distances, exploration--return patterns, and repeated activity motifs \cite{brockmann2006scaling,gonzalez2008understanding,song2010modelling,pappalardo2015returners,schneider2013unravelling}.
These observations motivate transfer, but they do not establish that complete mobility distributions are invariant across cities.
Fig. \ref{fig:motivation} compares absolute destinations and relative transitions on NewYork (NYC) and Tokyo (TKY). The spatial layout and the popularity of POIs differ between the two cities. In contrast, the transition distribution is highly consistent across the two cities at the level of the functional semantics of places. It demonstrates that relative transitions carry less city-specific information than absolute destinations.

Motivated by this hypothesized difference, we reformulate cross-city generation around relative transitions rather than absolute destinations.
We transfer a source-learned prior over context conditioned relative transitions to capture \emph{how people move}, and then ground its samples on target-city POIs to determine \emph{where those movements are realized}.
Unlike an absolute destination, such a transition is defined by relations between place contexts rather than by a city-private POI identity.
We realize this separation with \textit{\method{}}, a transfer-and-ground framework.
Because POIs in different cities share neither identities nor comparable coordinates, a multi-relational graph encoder represents each POI through its category, coordinates, and spatial-semantic neighborhood, while a cross-city objective maps functionally similar contexts into a comparable representation space.
A history-conditioned flow-matching model then learns from source trajectories a prior over semantic displacement, geographic displacement, and elapsed time in this space.
Modeling relations rather than POI identities allows the prior to incorporate temporal and trajectory history without treating source-specific locations as prediction targets.
The sampled transitions are continuous, whereas a target trajectory consists of discrete POIs.
\textit{\method{}} therefore snaps each sample to a target POI and aggregates these assignments into a behavior graph without accessing target trajectories.
Walking this graph accounts for exploration, but a relative displacement cannot specify a return to a particular previously visited place.
An exploration--return pointer walk handles this distinction by either following a graph edge or revisiting a POI from the generated history.
Thus, source trajectories determine how people move, while the target map determines where those movements can be realized.
The main contributions are:
\begin{enumerate}[0]
\item[$\bullet$] We formulate target-trajectory-free mobility generation and specify an evaluation protocol in which the target contributes a POI map but no trajectory-derived training or generation signal.
\item[$\bullet$] We introduce a transfer-and-ground factorization that represents movement through context-conditioned semantic displacement, geographic displacement, and elapsed time, then realizes sampled transitions on the target POI map through behavior-graph grounding and exploration--return modeling.
\item[$\bullet$] Across 10 cities and 14 source-to-target transfers, \method{} improves trajectory fidelity by about 15\% and downstream train-on-synthetic--test-on-real utility by about 3\% over six baselines. Controlled analyses further show that relative transitions contain less city-specific information under matched contexts and incur less cross-city degradation than absolute-destination generation, directly supporting the proposed mechanism.
\end{enumerate}

\begin{figure*}[ht]
\vskip -0.2in
\begin{center}
\centerline{\includegraphics[width=\linewidth]{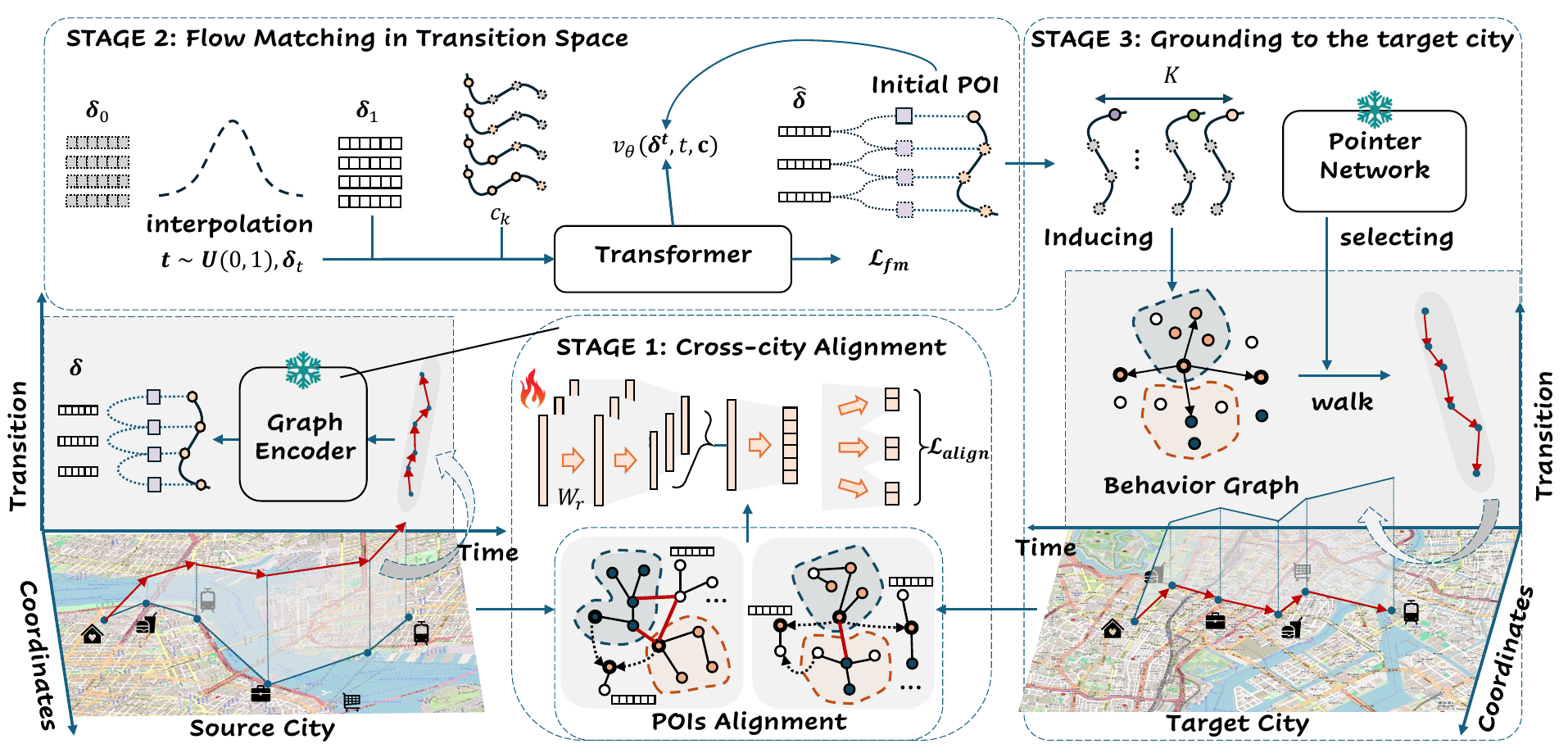}}
\vspace{-5pt}
\caption{Architecture overview of \method{}.
}
\label{fig:framework}
\end{center}
\vskip -0.2in
\end{figure*}
\section{Preliminary}
\label{sec:prelim}

\subsection{Problem Formulation}
\begin{definition}[POI]
A point of interest (POI) is a place $i$ associated with geographic coordinates $\boldsymbol{\ell}_i=(\mathrm{lat}_i,\mathrm{lon}_i)$ and a category $c_i\in\mathcal{K}$, where $\mathcal{K}$ is the category vocabulary. The POIs of a city form a set $\mathcal{P}=\{p_1,\dots,p_N\}$ of size $N$, which we connect into a graph $\mathcal{G}=(\mathcal{P},\mathcal{E})$ whose edges $\mathcal{E}$ link spatially or semantically related POIs.
\end{definition}

\begin{definition}[Trajectory]
A check-in $(p,t)$ records a user's visit to POI $p$ at timestamp $t$. A trajectory is a discrete, time-ordered sequence $\tau=\big((p_1,t_1),\dots,(p_L,t_L)\big)$ of $L$ check-ins, with $t_1\le t_L$.
\end{definition}

\begin{definition}[Target-trajectory-free Generation]
Let $\mathcal{S}$ index the source cities. For each source city $s\in\mathcal{S}$ we observe its POI set $\mathcal{P}^{s}$ and a set of real trajectories $\mathcal{T}^{s}$ over $\mathcal{P}^{s}$, whereas for the target city we observe only its POI set $\mathcal{P}^{\mathrm{tgt}}$. The task is to learn a generator $f(\cdot)$: 
\begin{equation}
\widehat{\mathcal{T}}^{\mathrm{tgt}}
= f\Big(\{(\mathcal{P}^{s},\mathcal{T}^{s})\}_{s\in\mathcal{S}},\ \mathcal{P}^{\mathrm{tgt}}\Big),
\label{eq:ttf}
\end{equation}
whose output $\widehat{\mathcal{T}}^{\mathrm{tgt}}$ is a set of check-in trajectories over $\mathcal{P}^{\mathrm{tgt}}$ that is distributionally close to the real trajectories of the target city. The defining constraint is that the target trajectory set $\mathcal{T}^{\mathrm{tgt}}$, and any statistic derived from it. This information boundary separates the problem from domain adaptation and few-shot transfer learning, where target observations remain available.
\label{def:ttf}
\end{definition}

\subsection{Transferability of Transitions}

Definition~\ref{def:ttf} forbids any target trajectory, so $f$ can rely only on structure that is shared across cities. We argue that this shared structure lies in the transitions between places rather than in the absolute places visited. and we accordingly read a trajectory as a chain of transitions.

\begin{definition}[Transition]
A transition is the move between two consecutive check-ins $(p_{k-1},t_{k-1})$ and
$(p_k,t_k)$. We describe it by three quantities that do not name an absolute
destination, namely a representation of the change of place context, the ground
displacement between the two POIs, and the elapsed time $\Delta t=t_k-t_{k-1}$. A
trajectory is then a chain of such transitions.
\label{def:transition}
\end{definition}

The task above transfer what is learned from source trajectories to a city that has none. Whether the transition view supports such transfer needs justification. Based on the classical invariance theory \cite{lehmann2005testing, ben2010theory}, we now characterize which transition representations transfer across cities, and show that the displacement between POI embeddings is the canonical one. 
To begin, we establish a translation alignment assumption for this problem,
Let $\mathbf{h}$ be any map that embeds every POI of every city into a shared space $\mathcal{H}$.

\begin{assumption}[Translation Alignment]
For two cities $A$ and $B$, let $\varphi$ be a functional correspondence that maps a POI of $A$ to a POI of $B$ playing the same functional role. There exists an offset $\mathbf{t}_{AB}\in\mathcal{H}$ such that $\mathbf{h}_{\varphi(p)}=\mathbf{h}_{p}+\mathbf{t}_{AB}$ for all $p\in\mathcal{P}_{A}$.
\label{asm:gauge}
\end{assumption}
That is, the cities populate the shared space as copies of one functional layout that differ only by a shift, in the spirit of relations as embedding differences \cite{ben2010theory, bordes2013translating}. This is a requirement on the learned representation rather than on raw data. To satisfy this assumption, we introduced a cross-city shared encoder in Sec. ~\ref{sec:encoder}. 
With the space aligned up to a shift, we next define what it means for a transition representation to transfer across cities.

\begin{definition}[Transferable Transition Representation]
A transition representation $F$ assigns a vector $F(\mathbf{h}_{a},\mathbf{h}_{b})$ to the move from POI $a$ to $b$. It is transferable if functionally equivalent moves receive the same value in every city:
\begin{equation}
F\big(\mathbf{h}_{\varphi(a)},\,\mathbf{h}_{\varphi(b)}\big)=F\big(\mathbf{h}_{a},\,\mathbf{h}_{b}\big).
\label{eq:transferable}
\end{equation}
\label{def:transferable}
\end{definition}
Under Assumption~\ref{asm:gauge} this is exactly invariance to a common shift of both arguments, and the classical factorization of invariant functions \cite{lehmann2005testing} yields the following:

\begin{theorem}
Under Assumption~\ref{asm:gauge}, the displacement $\boldsymbol{\delta}(a,b)=\mathbf{h}_{b}-\mathbf{h}_{a}$ is transferable, and it is complete in that every transferable transition representation is a function of $\boldsymbol{\delta}$. Hence, whenever the assumption holds, a transition distribution fitted on source cities applies to the functionally same moves of any other city without target data.
\label{thm:delta}
\end{theorem}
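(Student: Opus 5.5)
The plan has three parts: transferability of $\boldsymbol{\delta}$, completeness, and the distributional consequence. All three follow from Assumption~\ref{asm:gauge}.

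\emph{Transferability.} Fix cities $A,B$ and POIs $a,b\in\mathcal{P}_A$. By Assumption~\ref{asm:gauge}, $\mathbf{h}_{\varphi(a)}=\mathbf{h}_a+\mathbf{t}_{AB}$ and $\mathbf{h}_{\varphi(b)}=\mathbf{h}_b+\mathbf{t}_{AB}$. Subtracting gives $\boldsymbol{\delta}(\varphi(a),\varphi(b))=\mathbf{h}_b-\mathbf{h}_a=\boldsymbol{\delta}(a,b)$. Hence $F=\boldsymbol{\delta}$ satisfies \eqref{eq:transferable}, because the common offset cancels.

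\emph{Completeness.} Here I use the classical maximal-invariant factorization \cite{lehmann2005testing}. First I rewrite \eqref{eq:transferable} as invariance of $F$ under the diagonal translation action $(\mathbf{x},\mathbf{y})\mapsto(\mathbf{x}+\mathbf{t},\mathbf{y}+\mathbf{t})$ on $\mathcal{H}\times\mathcal{H}$. Next I show that $\boldsymbol{\delta}(\mathbf{x},\mathbf{y})=\mathbf{y}-\mathbf{x}$ is a maximal invariant. It is invariant by the previous step. It is maximal because $\mathbf{y}-\mathbf{x}=\mathbf{y}'-\mathbf{x}'$ implies $(\mathbf{x}',\mathbf{y}')=(\mathbf{x}+\mathbf{t},\mathbf{y}+\mathbf{t})$ with $\mathbf{t}=\mathbf{x}'-\mathbf{x}$. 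I then define $g(\mathbf{d}):=F(\mathbf{0},\mathbf{d})$, or $F(\mathbf{x}_0,\mathbf{x}_0+\mathbf{d})$ for any reference point $\mathbf{x}_0$. Invariance gives $F(\mathbf{x},\mathbf{y})=F(\mathbf{x}-\mathbf{x},\mathbf{y}-\mathbf{x})=g(\mathbf{y}-\mathbf{x})$, so $F=g\circ\boldsymbol{\delta}$.

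\emph{Main obstacle.} The obstacle is that \eqref{eq:transferable} only constrains $F$ at pairs actually realized by POI embeddings. Moreover, it only applies for the specific offsets $\mathbf{t}_{AB}$ between the cities under consideration, not for all $\mathbf{t}\in\mathcal{H}$. A pure POI-level reading therefore gives only that $F$ is constant on each orbit intersected with the embedded pairs. That suffices for a well-defined $g$ on realized displacements only if two pairs with equal $\boldsymbol{\delta}$ always lie in one orbit. I would handle this in one of two ways. The first is to interpret transferability, as the paper does in the sentence preceding the theorem, as invariance under every common shift of $\mathcal{H}$. The second is to note that the class of admissible cities is closed under translation, so every $\mathbf{t}$ arises as some $\mathbf{t}_{AB}$. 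I would state this interpretive step explicitly, after which the factorization above is rigorous.

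\emph{Distributional consequence.} Let $P^{s}$ be the source law of $\boldsymbol{\delta}$, possibly conditioned on transferable context such as elapsed time or history summarized through displacements. Under the correspondence $\varphi$, the target law of $\boldsymbol{\delta}$ over functionally matched moves is the pushforward of the source law through $(a,b)\mapsto(\varphi(a),\varphi(b))$. Since $\boldsymbol{\delta}\circ(\varphi\times\varphi)=\boldsymbol{\delta}$, the two laws coincide. Hence a distribution fitted on source displacements is the correct distribution for matched target moves, and no target trajectory is needed to obtain it.
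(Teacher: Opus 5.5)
Your proposal is correct and follows the paper's proof essentially step for step. The common offset $\mathbf{t}_{AB}$ cancels in $\boldsymbol{\delta}$; completeness comes from shifting by $\mathbf{t}=-\mathbf{h}_a$ to get $F=\tilde{F}(\boldsymbol{\delta})$ with $\tilde{F}(\mathbf{u})=F(\mathbf{0},\mathbf{u})$, which is the maximal-invariant factorization; and the distributional claim follows because matched moves share the same value of $\boldsymbol{\delta}$. The interpretive step you flag, reading transferability as invariance under every common shift of $\mathcal{H}$, is exactly what the paper assumes when it says the offset between cities is arbitrary, so stating it explicitly is a useful clarification rather than a departure.
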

The proof is deferred to Appendix~\ref{app:proof_d}. We stress that the guarantee is conditional. It does not assert that real cities admit such a representation, but that once the encoder aligns them up to a shift, modeling $\boldsymbol{\delta}$ preserves all transferable information. The theorem also delimits what $\boldsymbol{\delta}$ alone cannot carry.

\section{Methodology}
\label{sec:method}
In this section, we present a three-stage framework \textit{\method{}} for target-trajectory-free generation, as illustrated in Fig. \ref{fig:framework}.

\subsection{Cross-City POI Alignment}
\label{sec:encoder}

The goal of this stage is to approximate Assumption ~\ref{asm:gauge}, which embeds the POIs of every city into one shared semantic space.
What defines the role of a POI is not its identity but its attributes and the places around it. We therefore build the POIs of each city into a multi-relational graph, which edges are induced by spatial location, and node features are derived from the category semantics and geographic coordinates.
Finally, we encode this graph with a Relation-aware Graph Attention Network to capture the shared representation underlying functionally similar POIs across cities.

\textbf{Multi-relational POIs Graph.}
Specifically, We connect the POIs of each city with three relations. We first partition the city into hexagonal cells, following the standard way in spatial analysis  \cite{wozniak2021hex2vec}. Each cell covers a compact local area of roughly one square kilometer. Within a cell, we link each POI to its nearest neighbors, so that places close to each other are connected. Across neighboring cells, we link each POI to its nearest neighbors, so that its wider spatial surroundings are connected. For each POI, we also link the nearest POIs of the same category, so that places of similar role are connected. All three kinds of edges are built by nearest-neighbor search over geographic distance.

The input feature of a POI is made of its coordinates and category semantics. We obtain the semantics by encoding the category name with Sentence-BERT \cite{reimers2019sentence}. 
We write the feature of POI $i$ as $\mathbf{x}_i=[\,\bar{\boldsymbol{\ell}}_i\,\|\,\mathbf{s}_{c_i}\,]$, where $\bar{\boldsymbol{\ell}}_i$ is the normalized coordinates and $\mathbf{s}_{c_i}$ is the category embedding. 
For each relation $r$, a GATv2 \cite{brody2021attentive} turns the neighbors of a POI into a per-relation representation $\mathbf{h}^r$. It takes the attention-weighted sum of the neighbor features as follows:
\begin{equation}
\begin{aligned}
e_{ij}^{r}&=\mathbf{a}_r^{\top}\sigma\big(\mathbf{W}_r[\mathbf{x}_i\,\|\,\mathbf{x}_j]\big),\\
\alpha_{ij}^{r}&=\frac{\exp(e_{ij}^{r})}{\sum_{k\in\mathcal{N}_r(i)}\exp(e_{ik}^{r})},\\
\mathbf{h}_i^{r}&=\sigma\Big(\sum\nolimits_{j\in\mathcal{N}_r(i)}\alpha_{ij}^{r}\,\mathbf{W}_r\,\mathbf{x}_j\Big),
\end{aligned}
\label{eq:gatv2}
\end{equation}
where $\mathbf{W}_r$ and $\mathbf{a}_r$ are the projection and attention vector of relation $r$, $\sigma$ is a nonlinearity. An aggregation layer then merges the per-relation representations $\mathbf{h}^r$ into the updated POI embedding,
\begin{equation}
\mathbf{h}_i = \sum\nolimits_{r}(\beta_r\,\mathbf{h}_i^{r}),
\label{eq:enclayer}
\end{equation}
where $\beta$ is the learnable weight. We train the encoder jointly on the graphs of multiple source cities, so that it learns to align functionally similar POIs across cities.

\textbf{Multi-task Alignment Objective.}
To drive the POI representations toward cross-city alignment, we optimize the encoder with three tasks at different levels.
\uline{At the node level}, a classification head $\mathbf{W}_{cat}$ predicts the category of each POI from its embedding. On a masked subset $\mathcal{M}$ of nodes, whose category features are hidden, we minimize the cross-entropy:
\begin{equation}
\mathcal{L}_{\mathrm{cat}}=\frac{1}{|\mathcal{M}|}\sum_{i\in\mathcal{M}}\mathcal{C}\big(c_i,\,\mathbf{W}_{\mathrm{cat}}\mathbf{h}_i\big),
\label{eq:catloss}
\end{equation}
where $\mathcal{C}(\cdot)$ is the cross-entropy function, $c_i$ is the true category of node $i$.
\uline{At the link level}, a link prediction loss recovers masked edges and keeps the embedding geometry consistent with the graph. For each masked edge $(u,v)$ and a sampled non-edge $(u,v^{-})$, we minimize the cross-entropy:
\begin{equation}
\mathcal{L}_{\mathrm{link}}=\frac{1}{|\mathcal{E}_m|}\sum_{(u,v)\in\mathcal{E}_m}\Big[\mathcal{C}\big(1,\sigma(\mathbf{h}_u^{\top}\mathbf{h}_v)\big)+\mathcal{C}\big(0,\sigma(\mathbf{h}_u^{\top}\mathbf{h}_{v^{-}})\big)\Big],
\label{eq:linkloss}
\end{equation}
where $\mathcal{E}_m$ is the set of masked edges, the label is $1$ for a true edge $(u,v)$ and $0$ for a sampled non-edge $(u,v^{-})$. $\sigma(\cdot)$ is the activation function.
\uline{At the cross-city level}, a contrastive loss pulls each POI toward a place of a similar category in another city while pushing it away from POIs of distinct categories:
\begin{equation}
\small
\mathcal{L}_{\mathrm{con}}=-\frac{1}{|\mathcal{A}|}\sum_{a\in\mathcal{A}}
\log\frac{\exp\!\big(\mathrm{sim}(\mathbf{h}_a,\mathbf{h}^{+})/\tau\big)}
{\exp\!\big(\mathrm{sim}(\mathbf{h}_a,\mathbf{h}^{+})/\tau\big)+\sum_{j}\exp\!\big(\mathrm{sim}(\mathbf{h}_a,\mathbf{h}^{-}_j)/\tau\big)},
\label{eq:conloss}
\end{equation}
where $\mathbf{h}^{+}$ and $\{\mathbf{h}^{-}_j\}$ are the positive and negatives drawn from pre-built sets. The $\mathrm{sim}(\cdot)$ is cosine similarity, and $\tau$ is a temperature.

Consequently, the overall loss function for the cross-city encoder is formulated as follows:
\begin{equation}
\mathcal{L}_{\mathrm{align}}=\mathcal{L}_{\mathrm{con}}+\lambda_{\mathrm{cat}}\mathcal{L}_{\mathrm{cat}}+\lambda_{\mathrm{link}}\mathcal{L}_{\mathrm{link}},
\label{eq:encloss}
\end{equation}
where $\lambda_{cat}$ and $\lambda_{link}$ are the balanced parameters.
The aggregated GAT describes a POI by its spatial and categorical neighborhood, so functionally similar places acquire similar embeddings in cities that share no POI. The multi-task objective supplies the training signal that shapes this space. The category and link tasks make an embedding encode what a POI is and how it sits among its neighbors. The contrastive task then draws the same function from different cities into one region, so that the per-city embeddings become a single city-agnostic space. This is the precondition for extracting human movement knowledge across cities.

\subsection{Flow Matching in the Transition Space}
\label{sec:fm}

With all POIs placed in one shared space, this stage now model how people move within it. We construct the transition vector, and learn its distribution with a flow-matching-based generator from which new transitions can be sampled.

\textbf{Transition Vector.}
The object we model is the transition vector, which describes a single move between two consecutive check-ins. For two consecutive check-ins $(p_{k-1},p_k)$ we form it by subtracting the two POI embeddings in the shared space and appending a temporal and a geographic channel:
\begin{equation}
\boldsymbol{\delta}_k=\big[\,(\mathbf{h}_{p_k}-\mathbf{h}_{p_{k-1}})\ \big\|\ \phi_{\mathrm{time}}(\Delta t_k)\ \big\|\ \phi_{\mathrm{geo}}(\Delta\boldsymbol{\ell}_k)\,\big],
\label{eq:delta}
\end{equation}
where $\Delta t_k$ is the time gap. $\Delta\boldsymbol{\ell}_k$ is the ground displacement. $\phi_{\mathrm{time}},\phi_{\mathrm{geo}}$ encode them separately. Because every channel is a difference rather than an absolute value, $\boldsymbol{\delta}_k$ records what kind of move was made, not where it happened, and is therefore transferable across cities.
By Theorem~\ref{thm:delta}, this choice is canonical rather than heuristic. The difference is transferable, so modeling $\boldsymbol{\delta}$ loses nothing.

\textbf{History-conditioned Training.}
Another crucial design of this stage is how a source trajectory becomes training samples. 

Each sample can be a single transition $\boldsymbol{\delta}_k$. Yet according to Corollary~\ref{cor:context}, $\boldsymbol{\delta}$ is anti-symmetric and blind to its anchor, so it cannot express where a move starts, which direction it prefers, or whether it returns to a familiar place. We therefore further condition the model on the trajectory history.
Concretely, we supply the sequence of past POI embeddings as a context $\mathbf{c}_k=(\mathbf{h}_{p_1},\dots,\mathbf{h}_{p_{k-1}})$ and apply a mask that hides every later check-in, so the model predicts the $k$-th move from the first $k{-}1$ steps. In this way one trajectory yields a transition sample at every step, while at generation the same network draws a transition from a single conditioning POI.

\textbf{Flow-matching Generator.}
Given these samples we introduce a standard flow-matching generator \cite{lipman2022flow,tong2023improving}. It draws noise $\boldsymbol{\delta}^{0}\sim\mathcal{N}(\mathbf{0},\mathbf{I})$, interpolates it with the real transition $\boldsymbol{\delta}^{1}=\boldsymbol{\delta}_k$ along the straight path $\boldsymbol{\delta}^{t}=(1-t)\boldsymbol{\delta}^{0}+t\boldsymbol{\delta}^{1}$, and trains a velocity network $v_\theta$ conditioned on $\mathbf{c}_k$ to regress the transport velocity:
\begin{equation}
\mathcal{L}_{\mathrm{fm}}=\mathbb{E}_{t,\boldsymbol{\delta}^{0},\boldsymbol{\delta}^{1}}\big\|\,v_\theta(\boldsymbol{\delta}^{t},t,\mathbf{c}_k)-(\boldsymbol{\delta}^{1}-\boldsymbol{\delta}^{0})\,\big\|_2^2 .
\label{eq:fmloss}
\end{equation}

We instantiate this velocity network $v_\theta$ as a Transformer that runs over the whole trajectory in one pass following \cite{peebles2023scalable,ma2024sit}. The input is the noised transitions, all of POI embeddings and the flow time $t$. Its output is the predicted velocity $v_\theta(\boldsymbol{\delta}^{t},t,\mathbf{c}_k)$. 
Each token carries only the step $k$, formed by projecting its noised transition $\boldsymbol{\delta}_k^{t}$ and the $k$-th POI embedding into a latent space. A learnable positional embedding $\mathbf{PE}_k$ marks the step's order:
\begin{equation}
\mathbf{u}_k=[\boldsymbol{\delta}_k^{t}\,\|\,\mathbf{h}_{p_{k-1}}]+\mathbf{PE}_k.
\label{eq:token}
\end{equation}

The complete history of each trajectory is incorporated into the training process by a standard self-attention mechanism with an auto-regressive mask. Step $k$ attends only to the tokens at positions $k'\le k$, which lets it read the whole context $\mathbf{c}_k$ while never seeing a later step. The flow time $t$ is injected by modulating every block with adaptive layer normalization, following the time conditioning of DiT \cite{peebles2023scalable} and the feature-wise affine modulation of FiLM \cite{perez2018film}.

During the generation, a transition is drawn by integrating the learned velocity from noise. Starting from $\boldsymbol{\delta}^{0}$ with a conditioning context $\mathbf{c}$, we solve the flow ODE from $t{=}0$ to $t{=}1$ with $N$ forward-Euler steps of size $\Delta t=1/N$,
\begin{equation}
\boldsymbol{\delta}^{t+\Delta t}=\boldsymbol{\delta}^{t}+\Delta t\,v_\theta(\boldsymbol{\delta}^{t},t,\mathbf{c}),\qquad t=0,\tfrac{1}{N},\dots,\tfrac{N-1}{N}.
\label{eq:sample}
\end{equation}

The endpoint $\hat{\boldsymbol{\delta}}=\boldsymbol{\delta}^{1}$ is the generated transition. Setting the context $\mathbf{c}$ to a target-city POI yields the sampler that the next stage grounds into trajectories.
By generating in the transition space rather than over absolute locations, this stage decouples the pattern of movement from the specific places a city contains. The generator learns only how moves unfold and carries none of the source city's POIs, so it can transfer to an unseen city without retraining.

\subsection{Grounding in Unseen Cities}
\label{sec:generation}
Here the generator produces continuous transitions, but a trajectory generally is a sequence of discrete POIs of the target city. We bridge this gap in two steps: an induction step that turns the generated samples into a behavior graph over the target POIs, and a grounding step that walks this graph into trajectories.

\textbf{Behavior Graph Induction.}
For the first step, we build a static behavior graph before grounding. For every target POI $i$, we condition the frozen generator on $\mathbf{h}_i$ and draw $K$ transitions $\{\hat{\boldsymbol{\delta}}^{(s)}\}_{s=1}^{K}$. Each sampled transition is snapped to the target POI whose own displacement from $i$ best matches it in both the semantic and the geographic channel:
\begin{equation}
j^{\star}(\hat{\boldsymbol{\delta}})=\arg\max_{j}\ \cos\!\big(\hat{\boldsymbol{\delta}}^{\mathrm{sem}},\,(\mathbf{h}_j-\mathbf{h}_i)\big)+\,\cos\!\big(\hat{\boldsymbol{\delta}}^{\mathrm{geo}},\,\phi_{\mathrm{geo}}(\Delta\boldsymbol{\ell}_{ij})\big),
\label{eq:snap}
\end{equation}
where $\hat{\boldsymbol{\delta}}^{\mathrm{sem}}$ and $\hat{\boldsymbol{\delta}}^{\mathrm{geo}}$ are the semantic and geographic part of $\hat{\boldsymbol{\delta}}$. The candidates are restricted to POIs of the target city. Counting how often each successor is chosen over the $K$ draws gives a transition probability $P(i\!\to\! j)$:
\begin{equation}
P(i\!\to\! j)=\frac{1}{K}\sum_{s=1}^{K}\!\big[\,j^{\star}(\hat{\boldsymbol{\delta}}^{(s)})=j\,\big].
\label{eq:edgeprob}
\end{equation}
By retaining the high-probability successors of each POI, we can obtain a directed weighted behavior graph $\mathcal{G}_B$ over the target POIs.

\textbf{Pointer Network Grounding.}
For the second step, we will walk $\mathcal{G}_B$ into a trajectory. Exploring the graph alone would only ever move to new places, but real mobility also returns to familiar ones \cite{song2010modelling}.
Therefore, at each step the walk should decide whether to return and where to return. 
By Corollary~\ref{cor:context}, such a return preference is history dependent and cannot live in $\boldsymbol{\delta}$ itself, so it may be decided by a module conditioned on the walk history.
We introduce a lightweight Pointer Network \cite{see2017get, vinyals2015pointer}, pre-trained on the source city trajectories in which every step is labeled as a return or a new move. At a step $k$ we form a contextual feature vector:
\begin{equation}
\boldsymbol{\xi}_k=\Big[\ \sin\tfrac{2\pi\,\mathrm{hod}_k}{24},\ \ \cos\tfrac{2\pi\,\mathrm{hod}_k}{24},\ \ \tfrac{k}{L},\ \ \mathrm{dist}(\boldsymbol{\ell}_{p_k},\boldsymbol{\ell}_{p_1})\ \Big],
\label{eq:ptrfeat}
\end{equation}
where $\mathrm{hod}_k\in[0,24)$ is the hour of day encoded cyclically by the first two entries, $k/L$ is the normalized step index, and the last entry is the haversine distance from the current POI to the start POI $p_1$. Concatenating these features with the current POI embedding, a two-layer MLP encodes them into a walk state $\mathbf{st}_k=\mathrm{MLP}([\mathbf{h}_{p_k}\,\|\,\boldsymbol{\xi}_k])$. 
From this state we use two heads to read the decisions. A gate decides whether to return and an attention score decides where to return:
\begin{equation}
\begin{aligned}
\mathrm{prob}_k &= \sigma\!\big(\mathbf{w}_g^{\top}\mathbf{st}_k\big),\\
\mathrm{score}_k(j) &= \mathrm{softmax}_{j<k}\!\big(\mathbf{st}_k^{\top}\mathbf{w}_a\,\mathbf{h}_{p_j}\big),
\end{aligned}
\label{eq:ptrheads}
\end{equation}
where $\mathrm{prob}_k$ is the probability of returning at step $k$, $\mathrm{score}_k$ is the resulting distribution over which past POI to revisit. $\mathbf{w}_g$ and $\mathbf{w}_a$ are the learnable weights, $\sigma$ is the activation function. Finally, We draw a return with probability $\mathrm{prob}_k$, and on a return we sample the revisited POI from $\mathrm{score}_k$. Otherwise the walk explores, sampling the next POI from the out-edges $P(p_k\!\to\!\cdot)$ of $\mathcal{G}_B$. 
The network is trained on the source trajectories, jointly by a binary loss on the gate against the return label and a cross-entropy on the attention head against the revisited POI on the return steps. Additionally, we introduce a trade-off coefficient $\lambda_{gate}$ to combine the two losses and jointly train the Pointer Network. Because the balance between returning and exploring is a mobility regularity that holds across cities \cite{song2010modelling,pappalardo2015returners}, the return rate the gate learns on the source transfers as a prior rather than requiring any target data.

Together these two steps, we can assemble a full trajectory. We first sample a random timestamp and a start POI from the target city. At each step the Pointer Network follows the decisions and the behavior graph to the next place. The coordinate of the step is the coordinate of the visited POI. The time gap comes from the inverse of transition. We repeat until the sampled length. Finally, we can obtain a full synthetic trajectory of the target city's own POIs, with coordinates and times. Please refer to Appendix \ref{algorithm} for detailed procedures of the proposed algorithm, corresponding computational analysis and running time comparison.

\begin{table*}[ht]
\centering
\small
\caption{Main results on four zero-shot transfer settings. Each value is the mean$\pm$std over five generation seeds. \textbf{Bold} and \uline{underline} mean the best and the second-best results. \textbf{Markov}'s outputs do not include timestamps, so `--' is used as a placeholder. The last column reports \method{}'s average relative gain over the second-best method across the eight metrics.}
\label{tab:main}
\begin{tabular}{ll cccc cccc c}
\toprule
& \multirow{2}{*}{Methods} & \multicolumn{4}{c}{Downstream Utility} & \multicolumn{4}{c}{Distributional Fidelity}  &  \\
\cmidrule(lr){3-6}\cmidrule(lr){7-10}
& & Pred-MAPE\,$\downarrow$ & Pred-MSPE\,$\downarrow$ & Rec-MAPE\,$\downarrow$ & Rec-MSPE\,$\downarrow$ & OD-Sim\,$\downarrow$ & Distance\,$\downarrow$ & Speed\,$\downarrow$ & Time-Cat\,$\downarrow$  & \\
\midrule
\multirow{7}{*}{\rotatebox{90}{\footnotesize NYC$\to$TKY}} & Markov \cite{gambs2012next} & 0.390$\pm$0.040 & 0.165$\pm$0.036 & 0.180$\pm$0.049 & 0.054$\pm$0.037 & 0.616$\pm$0.005 & 0.227$\pm$0.001 & -- & --  & \multirow{7}{*}{\rotatebox{90}{\textcolor{red}{+24.76\%}}} \\
 & DITRAS \cite{pappalardo2018data} & 0.396$\pm$0.030 & 0.181$\pm$0.030 & 0.276$\pm$0.068 & 0.111$\pm$0.043 & 0.516$\pm$0.007 & 0.181$\pm$0.001 & 0.077$\pm$0.006 & 0.295$\pm$0.002  & \\
 & Gravity \cite{barbosa2018human} & 0.419$\pm$0.049 & 0.193$\pm$0.045 & 0.251$\pm$0.132 & 0.119$\pm$0.112 & \uline{0.460$\pm$0.003} & \uline{0.126$\pm$0.001} & 0.117$\pm$0.004 & \uline{0.265$\pm$0.001}  & \\
 & MIRAGE \cite{deng2025revisiting} & 0.390$\pm$0.015 & 0.188$\pm$0.010 & 0.197$\pm$0.004 & 0.110$\pm$0.001 & 0.925$\pm$0.068 & 0.137$\pm$0.001 & \uline{0.055$\pm$0.001} & 0.273$\pm$0.002  & \\
 & GTG \cite{wang2025gtg} & 0.392$\pm$0.015 & 0.191$\pm$0.010 & 0.168$\pm$0.025 & \uline{0.020$\pm$0.010} & 0.593$\pm$0.018 & 0.236$\pm$0.001 & 0.186$\pm$0.001 & 0.341$\pm$0.001  & \\
 & COLA \cite{wang2024cola} & \uline{0.342$\pm$0.023} & \uline{0.127$\pm$0.023} & \uline{0.152$\pm$0.032} & 0.035$\pm$0.016 & 0.656$\pm$0.004 & 0.275$\pm$0.002 & 0.193$\pm$0.001 & 0.335$\pm$0.001  & \\
 & \textbf{\method{}} & \textbf{0.318$\pm$0.006} & \textbf{0.107$\pm$0.002} & \textbf{0.130$\pm$0.003} & \textbf{0.019$\pm$0.003} & \textbf{0.393$\pm$0.007} & \textbf{0.036$\pm$0.005} & \textbf{0.023$\pm$0.007} & \textbf{0.234$\pm$0.011}  & \\
\cmidrule(l){1-11}
\multirow{7}{*}{\rotatebox{90}{\footnotesize KYO+NGO$\to$OSA}} & Markov \cite{gambs2012next} & 0.394$\pm$0.038 & 0.193$\pm$0.031 & 0.333$\pm$0.037 & 0.173$\pm$0.036 & 0.441$\pm$0.009 & 0.210$\pm$0.003 & -- & --  & \multirow{7}{*}{\rotatebox{90}{\textcolor{red}{+33.55\%}}} \\
 & DITRAS \cite{pappalardo2018data} & 0.490$\pm$0.092 & 0.274$\pm$0.093 & 0.420$\pm$0.112 & 0.251$\pm$0.122 & \uline{0.321$\pm$0.008} & 0.169$\pm$0.002 & 0.055$\pm$0.001 & 0.398$\pm$0.002  & \\
 & Gravity \cite{barbosa2018human} & 0.419$\pm$0.031 & 0.224$\pm$0.015 & 0.378$\pm$0.123 & 0.240$\pm$0.120 & 0.328$\pm$0.006 & 0.141$\pm$0.002 & \uline{0.041$\pm$0.001} & 0.310$\pm$0.002  & \\
 & MIRAGE \cite{deng2025revisiting} & \uline{0.381$\pm$0.045} & \uline{0.171$\pm$0.028} & 0.290$\pm$0.079 & 0.161$\pm$0.078 & 0.856$\pm$0.036 & \uline{0.110$\pm$0.002} & 0.050$\pm$0.001 & 0.329$\pm$0.003  & \\
 & GTG \cite{wang2025gtg} & 0.423$\pm$0.027 & 0.222$\pm$0.020 & \uline{0.237$\pm$0.033} & 0.126$\pm$0.010 & 0.638$\pm$0.015 & 0.174$\pm$0.002 & 0.186$\pm$0.001 & \uline{0.258$\pm$0.004}  & \\
 & COLA \cite{wang2024cola} & 0.434$\pm$0.021 & 0.206$\pm$0.013 & 0.259$\pm$0.032 & \uline{0.108$\pm$0.016} & 0.498$\pm$0.012 & 0.254$\pm$0.001 & 0.126$\pm$0.002 & 0.271$\pm$0.001  & \\
 & \textbf{\method{}} & \textbf{0.372$\pm$0.004} & \textbf{0.148$\pm$0.026} & \textbf{0.149$\pm$0.018} & \textbf{0.036$\pm$0.013} & \textbf{0.232$\pm$0.009} & \textbf{0.046$\pm$0.012} & \textbf{0.023$\pm$0.003} & \textbf{0.209$\pm$0.002}  & \\
\cmidrule(l){1-11}
\multirow{7}{*}{\rotatebox{90}{\footnotesize FUK+KYO+NGO$\to$OSA}} & Markov \cite{gambs2012next} & \uline{0.372$\pm$0.038} & \uline{0.157$\pm$0.019} & 0.262$\pm$0.099 & 0.103$\pm$0.077 & 0.436$\pm$0.014 & 0.206$\pm$0.001 & -- & --  & \multirow{7}{*}{\rotatebox{90}{\textcolor{red}{+16.44\%}}} \\
 & DITRAS \cite{pappalardo2018data} & 0.419$\pm$0.069 & 0.233$\pm$0.075 & 0.322$\pm$0.124 & 0.166$\pm$0.114 & 0.306$\pm$0.016 & 0.164$\pm$0.002 & 0.053$\pm$0.001 & 0.296$\pm$0.002  & \\
 & Gravity \cite{barbosa2018human} & 0.421$\pm$0.031 & 0.200$\pm$0.043 & 0.246$\pm$0.091 & 0.117$\pm$0.092 & \uline{0.275$\pm$0.006} & 0.142$\pm$0.004 & \uline{0.030$\pm$0.001} & 0.306$\pm$0.002  & \\
 & MIRAGE \cite{deng2025revisiting} & 0.384$\pm$0.024 & 0.210$\pm$0.017 & 0.286$\pm$0.070 & 0.157$\pm$0.072 & 0.847$\pm$0.146 & \uline{0.135$\pm$0.001} & 0.044$\pm$0.001 & 0.345$\pm$0.003  & \\
 & GTG \cite{wang2025gtg} & 0.429$\pm$0.030 & 0.228$\pm$0.023 & \uline{0.194$\pm$0.043} & \uline{0.059$\pm$0.026} & 0.645$\pm$0.022 & 0.175$\pm$0.001 & 0.186$\pm$0.001 & \uline{0.263$\pm$0.004}  & \\
 & COLA \cite{wang2024cola} & 0.393$\pm$0.041 & 0.178$\pm$0.026 & 0.222$\pm$0.104 & 0.090$\pm$0.083 & 0.502$\pm$0.016 & 0.256$\pm$0.001 & 0.130$\pm$0.001 & 0.274$\pm$0.002  & \\
 & \textbf{\method{}} & \textbf{0.367$\pm$0.026} & \textbf{0.150$\pm$0.017} & \textbf{0.192$\pm$0.054} & \textbf{0.053$\pm$0.035} & \textbf{0.259$\pm$0.006} & \textbf{0.050$\pm$0.002} & \textbf{0.023$\pm$0.001} & \textbf{0.204$\pm$0.002}  & \\
\cmidrule(l){1-11}
\multirow{7}{*}{\rotatebox{90}{\footnotesize ANK+BUR+IZM$\to$IST}} & Markov \cite{gambs2012next} & 0.348$\pm$0.102 & 0.165$\pm$0.087 & 0.332$\pm$0.057 & 0.205$\pm$0.085 & 0.657$\pm$0.009 & 0.166$\pm$0.002 & -- & --  & \multirow{7}{*}{\rotatebox{90}{\textcolor{red}{+20.76\%}}} \\
 & DITRAS \cite{pappalardo2018data} & 0.321$\pm$0.033 & 0.159$\pm$0.043 & 0.333$\pm$0.172 & 0.197$\pm$0.151 & 0.551$\pm$0.016 & 0.133$\pm$0.002 & 0.048$\pm$0.004 & \uline{0.210$\pm$0.001}  & \\
 & Gravity \cite{barbosa2018human} & 0.279$\pm$0.102 & 0.123$\pm$0.071 & 0.243$\pm$0.116 & 0.115$\pm$0.089 & \uline{0.408$\pm$0.007} & \uline{0.088$\pm$0.002} & \uline{0.027$\pm$0.001} & 0.238$\pm$0.001  & \\
 & MIRAGE \cite{deng2025revisiting} & \uline{0.209$\pm$0.063} & 0.076$\pm$0.063 & 0.335$\pm$0.109 & 0.259$\pm$0.126 & 0.963$\pm$0.015 & 0.158$\pm$0.003 & 0.044$\pm$0.002 & 0.323$\pm$0.003  & \\
 & GTG \cite{wang2025gtg} & 0.218$\pm$0.012 & \uline{0.070$\pm$0.006} & \uline{0.118$\pm$0.040} & \uline{0.026$\pm$0.015} & 0.679$\pm$0.018 & 0.316$\pm$0.003 & 0.164$\pm$0.001 & 0.228$\pm$0.003  & \\
 & COLA \cite{wang2024cola} & 0.230$\pm$0.050 & 0.071$\pm$0.039 & 0.164$\pm$0.086 & 0.063$\pm$0.075 & 0.824$\pm$0.012 & 0.265$\pm$0.002 & 0.206$\pm$0.002 & 0.226$\pm$0.001  & \\
 & \textbf{\method{}} & \textbf{0.195$\pm$0.047} & \textbf{0.056$\pm$0.023} & \textbf{0.117$\pm$0.007} & \textbf{0.023$\pm$0.005} & \textbf{0.399$\pm$0.005} & \textbf{0.034$\pm$0.001} & \textbf{0.010$\pm$0.013} & \textbf{0.209$\pm$0.001}  & \\
\bottomrule
\end{tabular}
\end{table*}

\section{Experiments}
\label{sec:exp}
\subsection{Experimental Settings}

\begin{table*}[t]
\centering
\small
\caption{Ablation study on the four transfers. Each value is the mean$\pm$std over five generation seeds. \textbf{Bold} mean the best results.}
\label{tab:ablation}
\begin{tabular}{ll cccc cccc}
\toprule
& Variants & \multicolumn{4}{c}{Downstream Utility} & \multicolumn{4}{c}{Distributional Fidelity} \\
\cmidrule(lr){3-6}\cmidrule(lr){7-10}
& & Pred-MAPE\,$\downarrow$ & Pred-MSPE\,$\downarrow$ & Rec-MAPE\,$\downarrow$ & Rec-MSPE\,$\downarrow$ & OD-Sim\,$\downarrow$ & Distance\,$\downarrow$ & Speed\,$\downarrow$ & Time-Cat\,$\downarrow$ \\
\midrule
\multirow{5}{*}{\rotatebox{90}{\scriptsize NYC$\to$TKY}} & w/o graph encoder & 0.419$\pm$0.004 & 0.247$\pm$0.003 & 0.224$\pm$0.010 & 0.117$\pm$0.001 & 0.501$\pm$0.002 & 0.060$\pm$0.001 & 0.025$\pm$0.000 & 0.254$\pm$0.001 \\
 & w/o flow-matching & 0.484$\pm$0.016 & 0.224$\pm$0.014 & 0.141$\pm$0.015 & 0.053$\pm$0.004 & 0.513$\pm$0.004 & 0.148$\pm$0.000 & 0.037$\pm$0.000 & 0.249$\pm$0.000 \\
 & w/o copy-gate & 0.431$\pm$0.005 & 0.178$\pm$0.001 & 0.160$\pm$0.000 & 0.042$\pm$0.003 & 0.400$\pm$0.002 & 0.049$\pm$0.000 & 0.025$\pm$0.000 & 0.235$\pm$0.000 \\
 & w/o AR-mask & 0.427$\pm$0.016 & 0.148$\pm$0.015 & 0.156$\pm$0.011 & 0.076$\pm$0.002 & 0.437$\pm$0.002 & 0.070$\pm$0.001 & 0.026$\pm$0.001 & 0.265$\pm$0.001 \\
 & w/o condition & 0.409$\pm$0.007 & 0.161$\pm$0.005 & 0.150$\pm$0.002 & 0.084$\pm$0.001 & 0.465$\pm$0.005 & 0.133$\pm$0.000 & 0.026$\pm$0.000 & 0.255$\pm$0.001 \\
 & \textbf{\method{}} & \textbf{0.318$\pm$0.006} & \textbf{0.107$\pm$0.002} & \textbf{0.130$\pm$0.003} & \textbf{0.019$\pm$0.003} & \textbf{0.393$\pm$0.007} & \textbf{0.036$\pm$0.005} & \textbf{0.023$\pm$0.007} & \textbf{0.234$\pm$0.011} \\
\cmidrule(l){1-10}

\multirow{5}{*}{\rotatebox{90}{\scriptsize KYO+NGO$\to$OSA}} & w/o graph encoder & 0.518$\pm$0.021 & 0.303$\pm$0.014 & 0.225$\pm$0.013 & 0.123$\pm$0.004 & 0.290$\pm$0.008 & 0.075$\pm$0.002 & 0.034$\pm$0.001 & 0.216$\pm$0.001 \\
 & w/o flow-matching & 0.410$\pm$0.035 & 0.230$\pm$0.022 & 0.245$\pm$0.011 & 0.088$\pm$0.012 & 0.312$\pm$0.005 & 0.109$\pm$0.001 & 0.043$\pm$0.001 & 0.219$\pm$0.001 \\
 & w/o copy-gate & 0.489$\pm$0.038 & 0.254$\pm$0.025 & 0.309$\pm$0.068 & 0.161$\pm$0.086 & 0.261$\pm$0.004 & 0.054$\pm$0.001 & 0.024$\pm$0.000 & 0.210$\pm$0.001 \\
 & w/o AR-mask & 0.452$\pm$0.017 & 0.246$\pm$0.011 & 0.196$\pm$0.060 & 0.062$\pm$0.030 & 0.238$\pm$0.016 & 0.061$\pm$0.001 & 0.029$\pm$0.001 & 0.255$\pm$0.001 \\
 & w/o condition & 0.440$\pm$0.021 & 0.245$\pm$0.009 & 0.223$\pm$0.132 & 0.079$\pm$0.069 & 0.260$\pm$0.011 & 0.065$\pm$0.000 & 0.033$\pm$0.001 & 0.251$\pm$0.001 \\
 & \textbf{\method{}} & \textbf{0.372$\pm$0.004} & \textbf{0.148$\pm$0.026} & \textbf{0.149$\pm$0.018} & \textbf{0.036$\pm$0.013} & \textbf{0.232$\pm$0.009} & \textbf{0.046$\pm$0.012} & \textbf{0.023$\pm$0.003} & \textbf{0.209$\pm$0.002} \\
\cmidrule(l){1-10}

\multirow{5}{*}{\rotatebox{90}{\scriptsize FUK+KYO+NGO$\to$OSA}} & w/o graph encoder & 0.398$\pm$0.025 & 0.245$\pm$0.015 & 0.255$\pm$0.051 & 0.134$\pm$0.018 & 0.284$\pm$0.018 & 0.064$\pm$0.002 & 0.030$\pm$0.001 & 0.222$\pm$0.001 \\
 & w/o flow-matching & 0.379$\pm$0.032 & 0.242$\pm$0.014 & 0.232$\pm$0.029 & 0.127$\pm$0.015 & 0.264$\pm$0.006 & 0.092$\pm$0.001 & 0.037$\pm$0.001 & 0.228$\pm$0.001 \\
 & w/o copy-gate & 0.483$\pm$0.013 & 0.257$\pm$0.006 & 0.306$\pm$0.079 & 0.140$\pm$0.072 & 0.267$\pm$0.007 & 0.054$\pm$0.002 & 0.025$\pm$0.000 & 0.205$\pm$0.002 \\
 & w/o AR-mask & 0.422$\pm$0.019 & 0.249$\pm$0.012 & 0.214$\pm$0.037 & 0.117$\pm$0.010 & 0.291$\pm$0.002 & 0.102$\pm$0.000 & 0.044$\pm$0.000 & 0.262$\pm$0.003 \\
 & w/o condition & 0.456$\pm$0.067 & 0.270$\pm$0.063 & 0.197$\pm$0.012 & 0.114$\pm$0.005 & 0.270$\pm$0.004 & 0.060$\pm$0.001 & 0.030$\pm$0.001 & 0.249$\pm$0.002 \\
 & \textbf{\method{}} & \textbf{0.367$\pm$0.026} & \textbf{0.150$\pm$0.017} & \textbf{0.192$\pm$0.054} & \textbf{0.053$\pm$0.035} & \textbf{0.259$\pm$0.006} & \textbf{0.050$\pm$0.002} & \textbf{0.023$\pm$0.001} & \textbf{0.204$\pm$0.002} \\
\cmidrule(l){1-10}

\multirow{5}{*}{\rotatebox{90}{\scriptsize ANK+BUR+IZM$\to$IST}} & w/o graph encoder & 0.302$\pm$0.048 & 0.160$\pm$0.022 & 0.187$\pm$0.038 & 0.050$\pm$0.018 & 0.613$\pm$0.004 & 0.055$\pm$0.000 & 0.015$\pm$0.001 & 0.264$\pm$0.001 \\
 & w/o flow-matching & 0.275$\pm$0.019 & 0.098$\pm$0.009 & 0.129$\pm$0.037 & 0.127$\pm$0.013 & 0.653$\pm$0.018 & 0.108$\pm$0.002 & 0.029$\pm$0.000 & 0.226$\pm$0.001 \\
 & w/o copy-gate & 0.332$\pm$0.028 & 0.195$\pm$0.024 & 0.252$\pm$0.123 & 0.121$\pm$0.091 & 0.463$\pm$0.002 & 0.046$\pm$0.001 & 0.012$\pm$0.000 & 0.217$\pm$0.002 \\
 & w/o AR-mask & 0.536$\pm$0.059 & 0.268$\pm$0.073 & 0.595$\pm$0.092 & 0.446$\pm$0.104 & 0.514$\pm$0.001 & 0.083$\pm$0.001 & 0.019$\pm$0.000 & 0.446$\pm$0.001 \\
 & w/o condition & 0.275$\pm$0.009 & 0.091$\pm$0.010 & 0.159$\pm$0.027 & 0.106$\pm$0.005 & 0.424$\pm$0.006 & 0.062$\pm$0.002 & 0.011$\pm$0.000 & 0.229$\pm$0.002 \\
 & \textbf{\method{}} & \textbf{0.195$\pm$0.047} & \textbf{0.056$\pm$0.023} & \textbf{0.117$\pm$0.007} & \textbf{0.023$\pm$0.005} & \textbf{0.399$\pm$0.005} & \textbf{0.034$\pm$0.001} & \textbf{0.010$\pm$0.013} & \textbf{0.209$\pm$0.001} \\
\bottomrule
\end{tabular}
\end{table*}

\subsubsection{Baselines.}
\label{sec:baselines}
We compare our method with six adaptation baselines. \textbf{Markov} \cite{gambs2012next} is a category-level Markov chain that is estimated on the source and grounded to the nearest target POIs. \textbf{DITRAS} \cite{pappalardo2018data} is a diary-based generator with category-level transitions and a gravity-style grounding that places each step by target POI density and distance. \textbf{Gravity} \cite{barbosa2018human} is the classic spatial-interaction model whose distance-decay exponents are fit on the source and applied to the target. \textbf{MIRAGE} \cite{deng2025revisiting} is a graph-enhanced discrete diffusion model that we adapt to the target city by mapping the output representations to the nearest POIs in the target city. \textbf{COLA} \cite{wang2024cola} is a cross-city Mobility Transformer whose output vocabulary is extended to the target POIs, and \textbf{GTG} \cite{wang2025gtg} is a graph Transformer with domain adaptation capability. Here the statistics from source cities are as its input to adapt our zero-shot setting. 

\subsubsection{Evaluation Metrics.}
\label{sec:metrics}
Following standard practice \cite{deng2025revisiting, deng2025marionette}, we evaluate the generation results in two aspects. One aspect is the \emph{downstream task utility} based on “training on synthetic data, testing on real data”. We train prediction and recommendation models using the generated trajectories, test them on a real target set, and report the gap between these models and those trained on real data. The prediction models include FPMC \cite{rendle2010factorizing}, BERT4Rec \cite{sun2019bert4rec}, SRGNN \cite{wu2019session}, and SASRec \cite{kang2018self}. Their gaps are averaged into \textbf{Pred-MAPE} and \textbf{Pred-MSPE}. The recommendation models include DMF \cite{xue2017deep}, MultiVAE \cite{liang2018variational}, NeuMF \cite{he2017neural}, and BPR \cite{rendle2012bpr}. Their gaps are averaged into \textbf{Rec-MAPE} and \textbf{Rec-MSPE}. 
The other aspect is \emph{distributional similarity} between generated and real trajectories. \textbf{OD-Sim} is the similarity of the origin-destination flow. It reflects where trips start and end. Three metrics are Jensen-Shannon divergences between the generated and the real distributions. \textbf{Distance} compares the lengths of moves. \textbf{Speed} compares how fast people move. \textbf{Time-Cat} compares which category of place is visited at which hour.

\subsection{Main Results}
\label{sec:main}

The comparisons of different methods on the four transfer settings are shown in Table~\ref{tab:main}. \textbf{\method{}} achieves the best results on both downstream utility and distributional fidelity in all four settings. Across all 14 transfer settings, ten of which are reported in Appendix~\ref{app:full}, \textbf{\method{}} lowers the average error over the best baseline of each metric by about 15\% on distributional fidelity and about 3\% on downstream utility. Specifically, the average gain over the eight metrics reaches more than 33\% on the \{KYO+NGO$\to$OSA\} transfer. Among the baselines, the neural methods including \textbf{MIRAGE}, \textbf{GTG} and \textbf{COLA} tend to perform better on the utility metrics, while the classical models \textbf{Markov}, \textbf{DITRAS} and \textbf{Gravity} are stronger on the distributional metrics. The former can model complex spatiotemporal distributions and learn sequence-level semantics, which benefits downstream tasks. The latter are built directly on trajectory statistics, so they capture the overall trends more easily. \textbf{\method{}} contains both ingredients, pairing a neural generator that models transitions at the sequence level with a behavior graph that aggregates their statistics over the target POIs. In addition, comparing the two Japanese transfers in Table~\ref{tab:main} and the Turkish settings in Appendix~\ref{app:full}, moving from a single source city to more source cities changes the results only marginally. More source trajectories bring limited gains to target-trajectory-free generation. This suggests that the knowledge \textbf{\method{}} models is shared across cities, and a small set of source cities is enough to learn it.

\begin{figure*}[htbp]
\centering
\subfigtopskip=-1pt 
\subfigcapskip=-5pt 
\subfigure[$\textbf{NYC} \rightarrow \textbf{TKY}$]{
	\label{l_3}
	\includegraphics[width=0.24\linewidth]{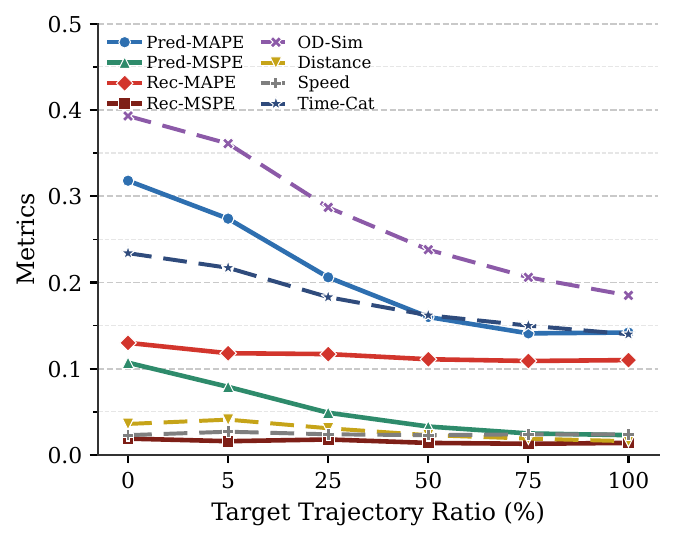}}
\subfigure[$\textbf{KYO+NGO} \rightarrow \textbf{OSA}$]{
	\label{l_3}
	\includegraphics[width=0.24\linewidth]{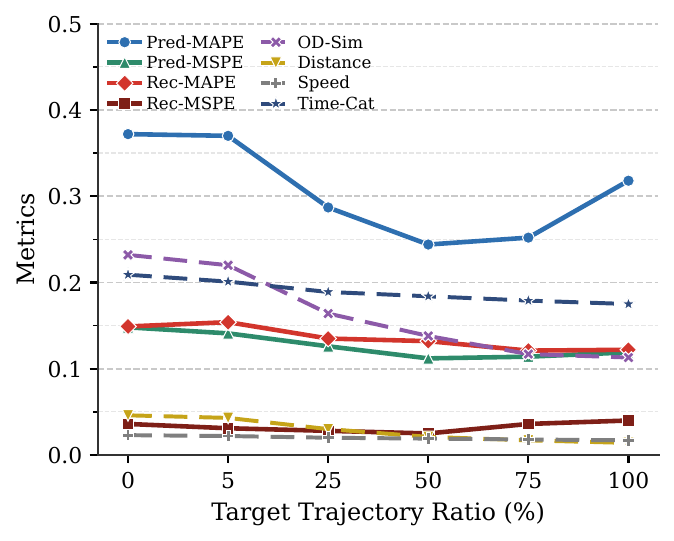}}
\subfigure[$\textbf{FUK+KYO+NGO} \rightarrow \textbf{OSA}$]{
	\label{l_1} 
	\includegraphics[width=0.24\linewidth]{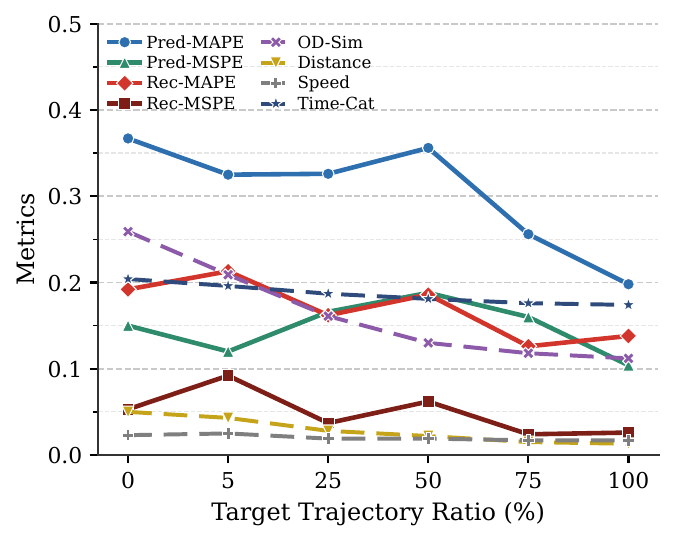}}
\subfigure[$\textbf{ANK+BUR+IZM} \rightarrow \textbf{IST}$]{
	\label{l_1} 
	\includegraphics[width=0.24\linewidth]{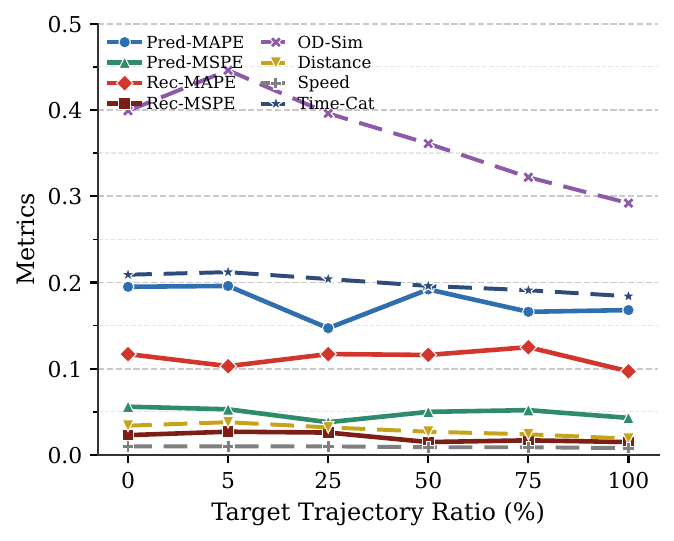}}
\vskip -0.1in
\caption{Impact of the target trajectory ratio for few-shot adapting on four transfers.}
\label{fig:fewshot} 
\end{figure*}

\subsection{Ablation Study}
\subsubsection{Ablation of Different Modules}
\label{sec:ablation}
To evaluate the efficacy of \textbf{\method{}}'s components, we conduct an ablation study comparing five variants (details in Appendix \ref{variants}). Table~\ref{tab:ablation} reports results on four transfers, demonstrating that deleting any module leads to performance declines, underscoring their critical contributions. Removing flow-matching hurts distributional metrics most, because the basic regressor keeps only mean movement and inflates Distance divergence. Conversely, removing the copy-gate barely moves distributions but raises Recommendation error, since walks without revisits lose the repeat structure recommenders learn. Removing the auto-regressive mask breaks step order and blows up prediction error. The graph encoder and history-condition underpin both aspects. Dropping either raises error across the board.

\subsubsection{Ablation of Data Efficiency}
\label{sec:fewshot}

We test how the error changes as an increasing fraction of the target trajectories is used for adaptation. We concatenate the source-city trajectories with a growing fraction of the target-city trajectories and train \textbf{\method{}} on the mixture. This moves the model from the target-trajectory-free setting to cross-city generation as more target trajectories are added. Figure~\ref{fig:fewshot} reports the eight metrics on four transfers. 
First, the distributional errors decrease steadily as more target trajectories are used, because the statistics of the real trajectories can be collected and learned directly. Downstream utility errors fluctuate, yet the overall trend remains downward. Second, across these transfer settings, only a small fraction of the target data brings the error close to the in-domain lower bound. This shows the transferred movement pattern covers most knowledge these tasks need. Third, the Metropolis setting behaves differently. Its POI vocabulary is larger, so its utility error keeps decreasing as more target data is added. This shows in-domain trajectories still help at larger data scales.

\begin{figure}[htbp]
\centering
\vspace{-0.1in}
\subfigtopskip=-1pt 
\subfigure[Absolute Destinations]{
	\label{l_1} 
	\includegraphics[width=0.45\linewidth]{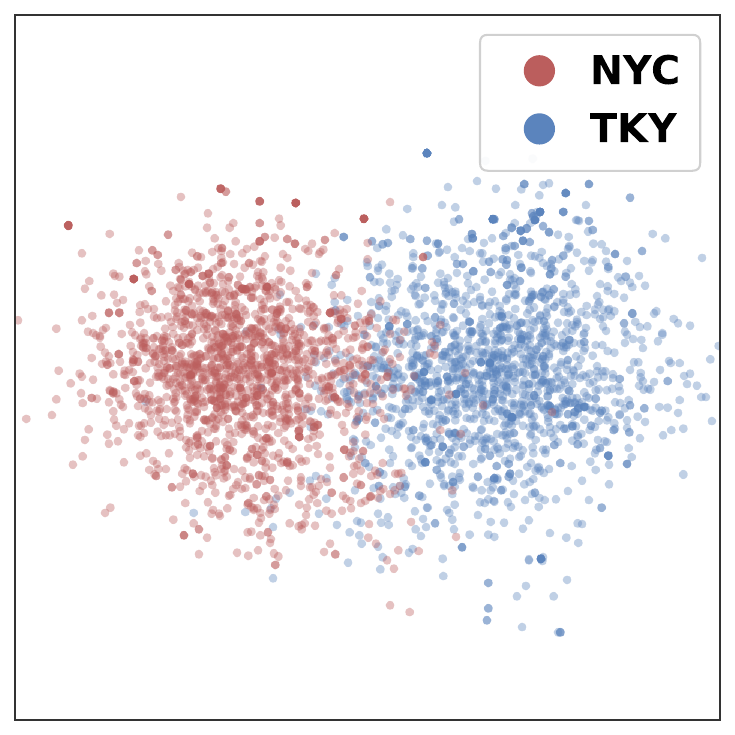}}
\subfigure[Relative Transitions]{
	\label{l_2}
	\includegraphics[width=0.45\linewidth]{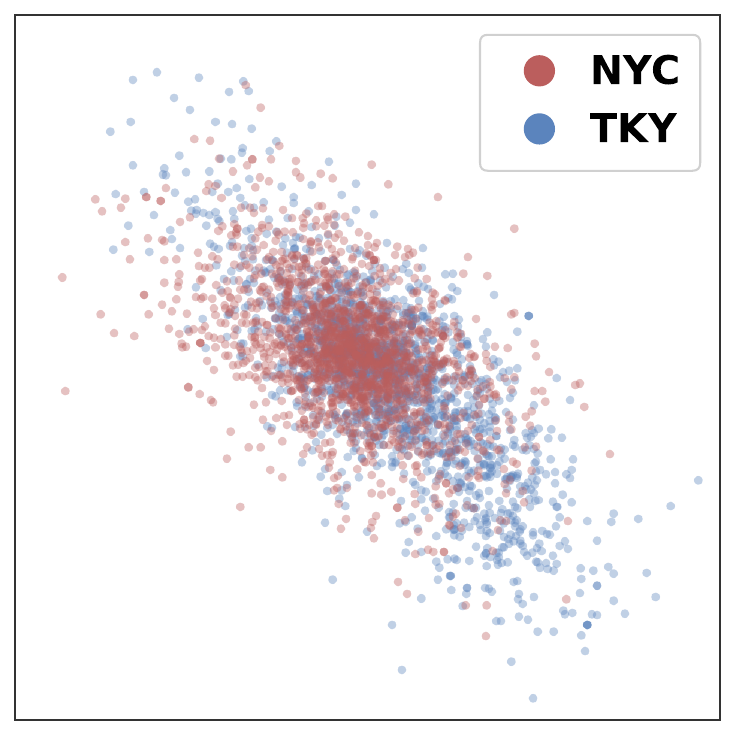}}
\caption{The embedding visualization of destinations and transitions on NYC and TKY.}
\vspace{-0.2in}
\label{fig:matched-transfer} 
\end{figure}

\begin{figure}[ht]
\vskip -0.2in
\begin{center}
\centerline{\includegraphics[width=\linewidth]{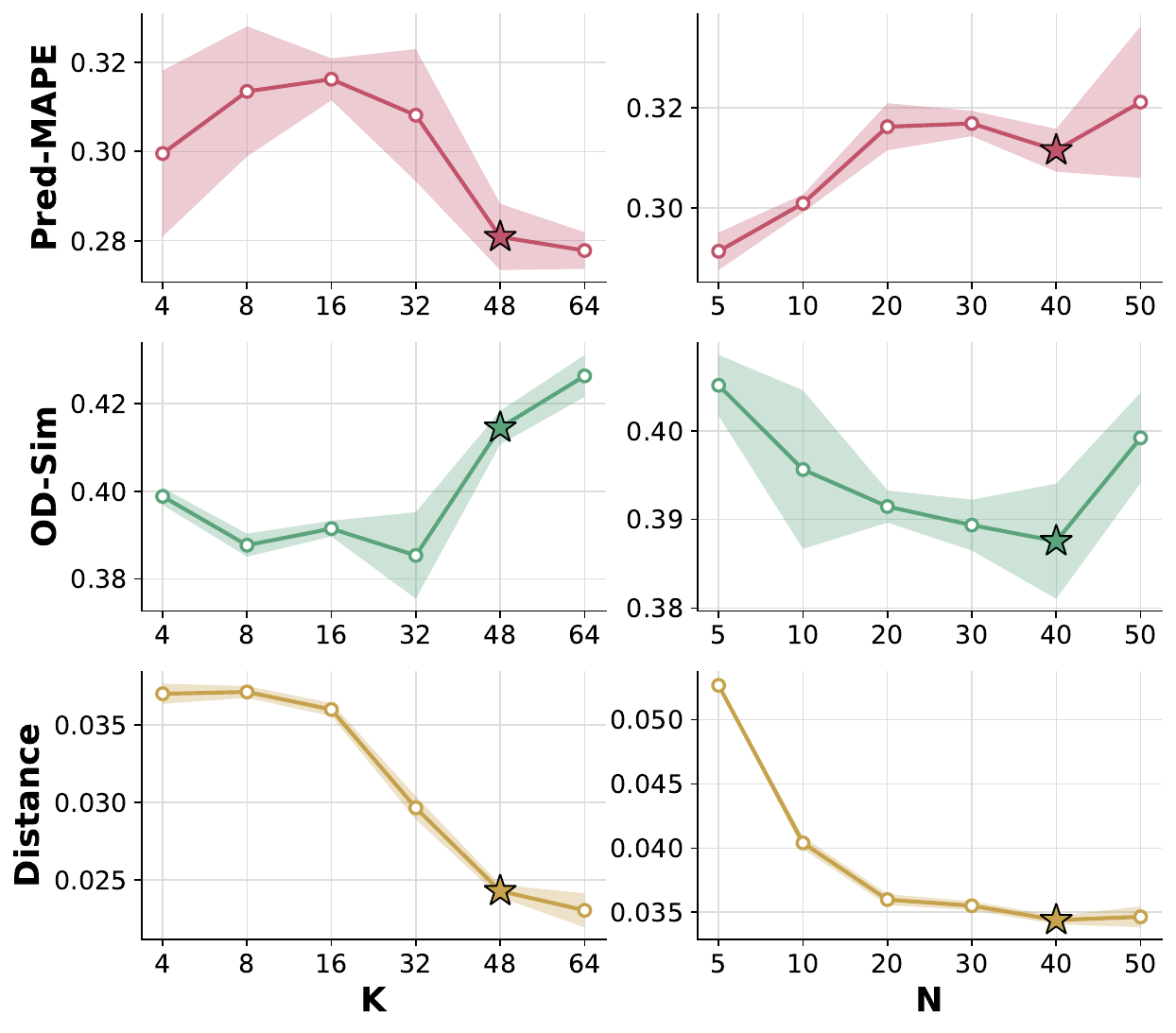}}
\vspace{-5pt}
\caption{Impact of $K$ and $N$ for grounding on NYC $\to$ TKY.
}
\label{fig:kn}
\end{center}
\vskip -0.2in
\end{figure}

\subsection{Analysis and Visualization}
\label{sec:analysis}

\subsubsection{Visualization of Cross-city Transitions and Destinations}

We visually examine the premise behind our design, that a relative transition is more
comparable across cities than an absolute destination. Using the frozen shared encoder we
embed the POIs of NYC and TKY. Then we read each check-in transition $a\!\to\!b$ and obtain the destination $h_b$ and the transition $\boldsymbol{\delta}=h_b-h_a$. We compare only like-for-like movements under a matched context. Each transition is grouped by its time-of-day bin and by the category of its destination, only groups populated by both cities are kept, and the larger city is subsampled so the two cities contribute equally.
Figure~\ref{fig:matched-transfer} shows the matched samples of the two cities for each
representation. For the absolute destinations, the two cities form clearly separated
clusters, so they still reveals which city a movement belongs to. For the relative transitions, the two cities overlap almost entirely, demonstrating that it is nearly indistinguishable across cities. Because functionally equivalent POIs in the two cities differ by an approximately constant translation, this translation is apparent in the absolute destination but cancels in the displacement $\boldsymbol{\delta}$, consistent with Assumption~\ref{asm:gauge} and Theorem~\ref{thm:delta}.

\subsubsection{Analysis of Sampling Budgets}

We vary the two sampling budgets of \method{}, the number of flow steps $N$ in the generator and the number of transition samples $K$ that build the behavior graph, and read their effect on one utility metric and two distributional metrics. Figure~\ref{fig:kn} reports the curves. 
As shown in Figure~\ref{fig:kn}, distributional fidelity is significantly affected by $N$ and $K$. On the one hand, more flow steps integrate the transition dynamics more accurately, so the Distance divergence drops sharply at $N{=}5$ to $N{=}20$ and stays flat afterward, and OD-Sim reaches its lowest value near $N{=}40$. The utility metric barely moves with $N$. A small number of steps is therefore enough for good fidelity. 
On the other hand, an increase in $K$ results in a denser behavior graph, leading to more accurate downstream prediction tasks. At the same time, however, the origin-destination structure becomes more diverse, causing a slight degradation in the OD-Sim metric.
Larger $K$ thus helps prediction and fine distance at a small cost to the OD match.

\section{Conclusion}

This paper investigates the problem of target-trajectory-free generation for those cities without accessible trajectories. The primary challenge lies in transferring movement knowledge across cities that share no POIs. To tackle this challenge, we propose \method{}, a transfer-and-ground framework. Our approach models relative transitions rather than absolute destinations with a flow-matching prior, and grounds them onto the target POIs with a behavior-graph and a pointer network.
Extensive experiments across fourteen transfers confirm that \method{} outperforms adaptation baselines in distributional fidelity and downstream utility. Controlled analyses further show that relative transitions contain less city-specific information under matched contexts.







\bibliographystyle{ACM-Reference-Format}
\bibliography{sample-base}

\clearpage
\newpage

\appendix

\section{Proof}
\subsection{Proof of Theorem~\ref{thm:delta}}
\label{app:proof_d}
\begin{proof}
For transferability, Assumption~\ref{asm:gauge} gives:
$\mathbf{h}_{\varphi(a)}=\mathbf{h}_{a}+\mathbf{t}_{AB}$ and $\mathbf{h}_{\varphi(b)}=\mathbf{h}_{b}+\mathbf{t}_{AB}$, hence
$\boldsymbol{\delta}(\varphi(a),\varphi(b))
=(\mathbf{h}_{b}+\mathbf{t}_{AB})-(\mathbf{h}_{a}+\mathbf{t}_{AB})
=\boldsymbol{\delta}(a,b)$,
so the offset cancels and Eq.~\eqref{eq:transferable} holds. For completeness, let $F$ be transferable. Since the offset between two cities is arbitrary, transferability amounts to $F(\mathbf{h}_{a}+\mathbf{t},\mathbf{h}_{b}+\mathbf{t})=F(\mathbf{h}_{a},\mathbf{h}_{b})$ for every $\mathbf{t}\in\mathbb{R}^{d}$. Choosing $\mathbf{t}=-\mathbf{h}_{a}$ yields $F(\mathbf{h}_{a},\mathbf{h}_{b})=F(\mathbf{0},\mathbf{h}_{b}-\mathbf{h}_{a})$, so $F=\tilde{F}(\boldsymbol{\delta})$ with $\tilde{F}(\mathbf{u})=F(\mathbf{0},\mathbf{u})$, which is the classical factorization of an invariant function through a maximal invariant \cite{lehmann2005testing}. Finally, functionally corresponding moves take identical values of $\boldsymbol{\delta}$, so a distribution fitted on source moves describes the corresponding moves of any city satisfying the assumption. The same argument applies verbatim to the time gap and the ground displacement, which are differences as well.
\end{proof}

\subsection{Corollary and the Proof}
\label{app:proof_c}

\begin{corollary}[Context Conditioning]
The displacement is antisymmetric, $\boldsymbol{\delta}(b,a)=-\boldsymbol{\delta}(a,b)$, and telescopes to zero along any closed chain of moves. Hence a distribution over $\boldsymbol{\delta}$ alone carries no preference about where a move starts, which direction it takes, or whether it returns to an earlier place. Let $\mathbf{c}$ be a context variable that is itself transferable in the sense of Definition~\ref{def:transferable}. Then the conditional distribution $p(\boldsymbol{\delta}\mid\mathbf{c})$ remains transferable while expressing such anchor and history dependent preferences.
\label{cor:context}
\end{corollary}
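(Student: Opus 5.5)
The plan has three parts, taken in the order of the statement: the algebraic facts about $\boldsymbol{\delta}$, the limitation of any law on $\boldsymbol{\delta}$ alone, and the transferability of the conditional law.

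\textbf{Algebra of $\boldsymbol{\delta}$.} Both properties follow directly from the definition $\boldsymbol{\delta}(a,b)=\mathbf{h}_b-\mathbf{h}_a$.
\begin{itemize}
\item Antisymmetry is a one-line computation: $\boldsymbol{\delta}(b,a)=\mathbf{h}_a-\mathbf{h}_b=-\boldsymbol{\delta}(a,b)$.
\item For a closed chain $p_0\to p_1\to\dots\to p_m=p_0$, write $\sum_{k=1}^{m}\boldsymbol{\delta}(p_{k-1},p_k)=\sum_k(\mathbf{h}_{p_k}-\mathbf{h}_{p_{k-1}})$. This sum telescopes to $\mathbf{h}_{p_m}-\mathbf{h}_{p_0}=\mathbf{0}$.
\end{itemize}

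\textbf{What a law on $\boldsymbol{\delta}$ alone cannot encode.} I would make ``carries no preference'' precise through the factorization in Theorem~\ref{thm:delta}.

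Start from a distribution $\mu$ over $\boldsymbol{\delta}$ used as the sole transition model. The induced kernel is $P(a\to\cdot)$, which pushes $\mu$ forward through $\mathbf{u}\mapsto\mathbf{h}_a+\mathbf{u}$. This kernel is the same translated law at every anchor $a$. Two consequences follow.
\begin{itemize}
\item \emph{Start and direction.} The law of the step does not depend on $a$. So neither the starting point nor any anchor-dependent direction can be expressed.
\item \emph{Returns.} Returning to an earlier POI $p_j$ means the accumulated displacement since step $j$ equals $\mathbf{0}$, by the telescoping identity. Under i.i.d.\ draws from a fixed $\mu$, the event that the partial sum hits exactly $-(\mathbf{h}_{p_k}-\mathbf{h}_{p_j})$ depends only on $\mu$ and the sum. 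It does not depend on the identity of $p_j$, so $\mu$ cannot favor revisiting a particular earlier place. If $\mu$ is continuous, this event even has probability zero.
\end{itemize}
I would state this as: any such preference must be a function of $(a,\text{history})$ that is not a function of $\boldsymbol{\delta}$ alone, and hence is not representable by $\mu$.

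\textbf{Transferability of the conditional.} Let $\mathbf{c}$ be transferable, i.e.\ $\mathbf{c}$ evaluated on $\varphi$-images equals $\mathbf{c}$ on the originals. Under Assumption~\ref{asm:gauge}, a functionally corresponding move $(\varphi(a),\varphi(b))$ together with its corresponding history yields the same pair $(\boldsymbol{\delta},\mathbf{c})$: $\boldsymbol{\delta}$ is preserved by Theorem~\ref{thm:delta}, and $\mathbf{c}$ by hypothesis.

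It follows that the joint law of $(\boldsymbol{\delta},\mathbf{c})$ on corresponding moves coincides across cities, and hence so does $p(\boldsymbol{\delta}\mid\mathbf{c})$. Meanwhile $\mathbf{c}$ may encode anchor or history information. For example, it can encode relative positions $\mathbf{h}_{p_j}-\mathbf{h}_{p_{k-1}}$, which are transferable, again by Theorem~\ref{thm:delta}. Conditioning on such a $\mathbf{c}$ lets the law of $\boldsymbol{\delta}$ place mass at $\mathbf{h}_{p_j}-\mathbf{h}_{p_{k-1}}$, which expresses a return to $p_j$.

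\textbf{Main obstacle.} The hardest step is the rigorous version of ``carries no preference.'' The statement is informal, so I would fix the formalization above, namely anchor-independence of the induced kernel. I would also give an explicit transferable $\mathbf{c}$ showing that the preferences become expressible, rather than proving a general impossibility claim.
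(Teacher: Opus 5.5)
Your proposal is correct and follows the same three steps as the paper's proof. First, the direct antisymmetry and telescoping computation. Second, the observation that a law on $\boldsymbol{\delta}$ alone gives equal likelihood to equal differences regardless of anchor, which your translated-kernel formalization makes explicit. Third, the pointwise invariance of the pair $(\boldsymbol{\delta},\mathbf{c})$ on $\varphi$-corresponding moves, which yields transferability of $p(\boldsymbol{\delta}\mid\mathbf{c})$. Your explicit transferable context built from the relative positions $\mathbf{h}_{p_j}-\mathbf{h}_{p_{k-1}}$, and your restriction of the direction claim to anchor-dependent directions, make precise what the paper's proof only asserts.
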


Theorem~\ref{thm:delta} and Corollary~\ref{cor:context} dictate the design of our method. The encoder should earn the Assumption by aligning functionally equivalent POIs up to a shift Sec.~\ref{sec:encoder}. The generator should be placed on $\boldsymbol{\delta}$ and, by the Corollary, conditioned on the trajectory context Sec.~\ref{sec:fm}. Returning to familiar places is likewise a history dependent preference that $\boldsymbol{\delta}$ alone cannot carry, so revisiting is decided by a dedicated module conditioned on the walk history. Then, trajectories are rebuilt by anchoring at a target POI and composing generated displacements, which is described in Sec.~\ref{sec:generation}.

\begin{proof}
Antisymmetry is immediate, and for any chain $p_{1},\dots,p_{K}$ the sum $\sum_{k=1}^{K-1}\boldsymbol{\delta}(p_{k},p_{k+1})=\mathbf{h}_{p_{K}}-\mathbf{h}_{p_{1}}$ vanishes when $p_{K}=p_{1}$. A distribution over $\boldsymbol{\delta}$ alone assigns equal likelihood to moves with equal difference regardless of their anchor, and determines the reverse move up to sign, so no start, direction, or return preference can be encoded. For the conditional law, if $\mathbf{c}$ is transferable then the pair $(\boldsymbol{\delta},\mathbf{c})$ takes identical values on functionally corresponding moves of two cities satisfying Assumption~\ref{asm:gauge}, hence the conditional distribution $p(\boldsymbol{\delta}\mid\mathbf{c})$ estimated on source cities coincides with that of the corresponding target moves, while depending on the anchor and the history through $\mathbf{c}$.
\end{proof}

\section{Related Work}

\subsection{Human Trajectory Generation}

Most trajectory generators are trained and deployed within a single city \cite{kapp2023generative}. Classical methods synthesize movement with activity models \cite{yuan2022activity,yuan2023learning}, adversarial or imitation learning \cite{feng2020learning,choi2021trajgail,jiang2023continuous,rao2020lstm}, and privacy-preserving generators \cite{wang2023pategail,long2023practical}. 
More recent work turns to diffusion models. Some denoise GPS-level trajectories, optionally conditioned on the road network \cite{zhu2023difftraj,zhu2024controltraj,wei2024diff,rao2025seed}. Others generate check-in sequences or refine them from coarse to fine \cite{deng2025revisiting,guo2026leveraging}. Beyond direct generation, high-fidelity synthetic-data pipelines scale these methods up \cite{zhu2023synmob}.
A parallel line uses large language models and agents \cite{park2023generative} to simulate mobility \cite{shao2026generating,li2024geo,li2024more}, mine trajectories \cite{liu2026traj}, and recommend next POIs by reasoning \cite{zhuang2026think2go,li2024large}.
Whether statistical, diffusion, or LLM-based, these methods are trained and generate within a single city, relying on its historical trajectories, thus hard to generalize to cities without trajectories.

\subsection{Cross-City Transfer for Mobility}

Cross-city transfer methods reuse mobility knowledge from data-rich cities. Most effort targets spatiotemporal prediction. Meta learning and region re-weighting adapt a source model to a target city \cite{pan2019urban,jin2022selective,wang2018cross,finn2017model}. Cross-city representation learning aligns regions or knowledge-graph embeddings for downstream reuse \cite{zhang2023deep,chen2025cross}. These methods act over a shared spatial grid rather than a discrete POI vocabulary, and still fit part of the model on target-city observations. Cross-city trajectory generation has been attempted recently, but the closest methods such as COLA, GTG, and CRAFT \cite{wang2024cola,wang2025gtg,li2026cross} remain bound to a target-specific location space and still require trajectories from the target city. Generating individual check-in trajectories for a city without trajectories remains challenging.

\section{Implementation Details}
\subsection{Datasets Details}

\label{sec:datasets}

We evaluate cross-city zero-shot generation on public Foursquare check-in data \cite{yang2014modeling, yang2016participatory} covering ten cities in three regions. New York City (NYC) and Tokyo (TKY) are large metropolises. The other eight cities come in two clusters: a Japanese cluster of Fukuoka (FUK), Kyoto (KYO), Nagoya (NGO), and Osaka (OSA); a Turkish cluster of Ankara (ANK), Bursa (BUR), Istanbul (IST), and Izmir (IZM). In every city a trajectory is the time-ordered sequence of one user's check-ins within one local day, truncated at fifteen steps. For NYC and TKY we keep days with at least three check-ins, whereas for the two clusters we keep denser days with at least six. Table~\ref{tab:datastats} reports the per-city statistics. From these cities we construct $14$ source-to-target transfers. Sec. \ref{sec:main} presents four pairs, covering both single-source and multi-source settings. Due to space limitations, the results on the remaining ten pairs are reported in Appendix~\ref{app:full}.

\begin{table}[t]
\centering
\small
\caption{Per-city statistics of the ten Foursquare cities.}
\label{tab:datastats}
\begin{tabular}{llccccc}
\toprule
Region & City & \#POI & \#Cat. & \#Traj. & \#Check-in & Avg.\,len. \\
\midrule
\multirow{2}{*}{Metropolis} & NYC & 38,333 & 250 & 27,662 & 135,717 & 4.9 \\
 & TKY & 61,858 & 247 & 73,999 & 397,416 & 5.4 \\
\midrule
\multirow{4}{*}{Japan} & FUK & 3,673 & 256 & 2,577 & 22,294 & 8.7 \\
 & KYO & 5,124 & 265 & 3,546 & 30,967 & 8.7 \\
 & NGO & 5,796 & 286 & 4,321 & 36,147 & 8.4 \\
 & OSA & 11,974 & 344 & 10,038 & 87,257 & 8.7 \\
\midrule
\multirow{4}{*}{Turkey} & ANK & 10,853 & 374 & 5,782 & 43,503 & 7.5 \\
 & BUR & 3,862 & 303 & 1,962 & 14,654 & 7.5 \\
 & IST & 19,996 & 388 & 9,705 & 73,040 & 7.5 \\
 & IZM & 10,826 & 366 & 6,513 & 48,702 & 7.5 \\
\bottomrule
\end{tabular}
\end{table}

\subsection{Ablation Variants} \label{variants}
We design five variants to analyze the contribution of components:
\begin{enumerate}[0]
\item[$\bullet$] \textbf{w/o graph encoder} replaces the relation-aware GATv2 of Eq. \eqref{eq:gatv2} and the aggregation of Eq. \eqref{eq:enclayer} with a single linear projection of the raw features, $\mathbf{h}_i=\mathbf{W}[\,\bar{\boldsymbol{\ell}}_i\,\|\,\mathbf{s}_{c_i}\,]$, dropping all message passing. It tests whether cross-city transfer needs graph propagation which can aggregate the geographic information.
\item[$\bullet$] \textbf{w/o flow-matching} replaces the flow-matching objective of Eq. \eqref{eq:fmloss} with a deterministic MSE regressor that predicts $\boldsymbol{\delta}_k$ in one forward pass, and drops the Euler sampler of Eq. \eqref{eq:sample}. It predicts only the conditional mean of transition distribution.
\item[$\bullet$] \textbf{w/o copy-gate} fixes $\mathrm{prob}_k=0$ in Eq.~\eqref{eq:ptrheads}, removing the revisit branch so grounding becomes a pure behavior-graph walk. It tests the contribution of the revisit mechanism to our model.
\item[$\bullet$] \textbf{w/o AR-mask} removes the auto-regressive mask from the self-attention over the tokens $\mathbf{u}k$ of Eq.~\eqref{eq:token} and uses bidirectional attention. A future token carries $\mathbf{h}{p_k}$, the ground-truth that step $k$ should predict, so the mask blocks this leakage. It tests the role of the auto-regressive constraint in preventing next-step-leakage.
\item[$\bullet$] \textbf{w/o condition} zeros the conditioning in the velocity network of Eq. \eqref{eq:fmloss}, replacing $v_\theta(\boldsymbol{\delta}^{t},t,\mathbf{c}k)$ with $v\theta(\boldsymbol{\delta}^{t},t,\mathbf{0})$, so the flow samples the marginal $p(\boldsymbol{\delta})$ instead of $p(\boldsymbol{\delta}\mid\mathbf{c}_k)$. The current POI anchor is still injected at the snapping step of Eq. \eqref{eq:snap}, so the walk stays anchored. The variant is used to validating history conditioning's effect on sequence modeling.
\end{enumerate}

\section{Algorithm Analysis}\label{algorithm}

We analyze the computational complexity of \method{} stage by stage. Let $|\mathcal{P}|$ be the number of POIs of a city and $|\mathcal{E}|$ the number of edges of its POI graph, $d$ the shared embedding dimension, $L$ and $R$ the number of encoder layers and relations, $D$ and $B$ the width and depth of the flow-matching velocity network, $S$ its number of integration steps, $K$ the number of transitions sampled per POI to build the behavior graph, $\Delta$ the out-degree of a graph node, and $M$ and $\ell$ the number and the maximum length of the generated trajectories. \emph{(i) Cross-city encoding.} The relation-aware GATv2 passes messages along the edges and transforms the node features, which costs $O\big(L R (|\mathcal{E}|\,d + |\mathcal{P}|\,d^{2})\big)$. It is run once per city and is linear in the city size. \emph{(ii) Behavior-graph construction.} For each POI we condition the frozen generator on its embedding and draw $K$ transitions, and each draw integrates the velocity network for $S$ steps, which costs $O\big(|\mathcal{P}|\,K\,S\,B\,D^{2}\big)$; snapping every sample to its best-matching target POI adds $O\big(|\mathcal{P}|\,K\,d\big)$, with no long denoising chain because $S$ is a small constant. \emph{(iii) Grounding walk.} The copy-gate walk generates each step with one gate evaluation and one transition lookup, so producing $M$ trajectories of length up to $\ell$ costs $O\big(M \ell (d + \Delta)\big)$. Summing the three stages, the overall complexity is
\begin{equation*}
O\big(\, L R (|\mathcal{E}|\,d + |\mathcal{P}|\,d^{2}) \;+\; |\mathcal{P}|\,K\,S\,B\,D^{2} + |\mathcal{P}|\,K\,d \;+\; M \ell (d + \Delta) \,\big).
\end{equation*}
Every architecture term ($L, R, d, B, D, S, K, C, \Delta, \ell$) is a fixed hyperparameter that does not grow with the data, so \method{} scales linearly with the number of POIs $|\mathcal{P}|$ and the number of generated trajectories $M$, and it avoids the iterative denoising required by diffusion-based generators.
All three stages are linear in the city size $|\mathcal{P}|$ and the number of generated trajectories $M$. The cost is dominated by the behavior-graph construction, which draws $|\mathcal{P}|\,K$ transitions of $S$ integration steps each, while $S$ and $K$ are small constants. Because \method{} needs only a few integration steps and a single-pass grounding walk, it avoids the long iterative denoising of diffusion-based generators.

The algorithm is shown in Algorithm \ref{alg:method}. During the whole experiments, we use an Intel(R) Xeon(R) Gold 5318Y CPU and four NVIDIA GeForce RTX 4090 GPU with 24GB memory to train and perform inference.

\begin{algorithm}[]
\caption{Detailed Procedures of \method{}}
\label{alg:method}
\begin{algorithmic}[1]
\REQUIRE {Source city trajectories $\{\mathcal{T}_s\}_{s=1}^{S}$, Source city POIs $\mathcal{V}_s$. and target city POIs $\mathcal{V}_t$.}
\ENSURE {Generated trajectories $\hat{\mathcal{T}}_t$ in the target city.}

\textbf{// First Stage: Cross-City POI Alignment}

\STATE{Construct multi-relational POI graphs from source and target cities;}
\STATE{Initialize the Relation-aware GATv2;}
\WHILE{not converged}
\STATE{Compute relation-specific node representations 
$\{\mathbf{h}^{r}_i\}$ by GATv2 aggregation (Eq. \eqref{eq:gatv2});}
\STATE{Aggregate multiple relation representations into POI embeddings $\mathbf{h}_i$ (Eq. \eqref{eq:enclayer});}

\STATE{Update encoder parameters by minimizing alignment objective  $\mathcal{L}_{\mathrm{align}}$ (Eq. \eqref{eq:encloss});}
\ENDWHILE

\textbf{// Second Stage: Flow Matching in Transition Space}

\STATE{Construct transition vectors 
$\boldsymbol{\delta}_k$ from source trajectories 
(Eq. \eqref{eq:delta});}
\STATE{Initialize the velocity network $v_\theta$;}
\WHILE{not converged}
\STATE{Sample noise transition $\boldsymbol{\delta}^{0}\sim\mathcal{N}(\mathbf{0},\mathbf{I})$ and real transition $\boldsymbol{\delta}^{1}$;}
\STATE{Interpolate transition samples $\boldsymbol{\delta}^{t}$ along the flow path;}
\STATE{Construct trajectory tokens 
$\mathbf{u}_k$ (Eq. \eqref{eq:token});}
\STATE{Predict transport velocity 
$v_\theta(\boldsymbol{\delta}^{t},t,\mathbf{c}_k)$;}
\STATE{Update $v_\theta$ by minimizing flow matching loss 
$\mathcal{L}_{\mathrm{fm}}$ (Eq. \eqref{eq:fmloss});}
\ENDWHILE

\textbf{// Third Stage: Target-City Grounding}

\FOR{each target POI $i\in\mathcal{V}_t$}
\STATE{Generate transition samples $\{\hat{\boldsymbol{\delta}}^{(s)}\}_{s=1}^{K}$ conditioned on $\mathbf{h}_i$;}
\STATE{Ground generated transitions to target POIs using the matching function (Eq. \eqref{eq:snap});}
\STATE{Calculate transition probability $P(i\rightarrow j)$ (Eq. \eqref{eq:edgeprob});}
\ENDFOR
\STATE{Construct target behavior graph $\mathcal{G}_B$;}
\STATE{Initialize Pointer Network with source trajectories;}
\WHILE{not converged}
\STATE{Encode current POI and contextual features 
$\boldsymbol{\xi}_k$ (Eq. \eqref{eq:ptrfeat});}
\STATE{Predict return probability and revisit distribution using Pointer Network heads (Eq. \eqref{eq:ptrheads});}
\STATE{Update Pointer Network parameters using return gate loss and revisit prediction loss;}
\ENDWHILE

\textbf{// Trajectory Generation}

\STATE{Sample initial POI and timestamp from target city;}
\WHILE{trajectory length is not reached}
\STATE{Generate transition by solving flow ODE with Euler sampling (Eq. \eqref{eq:sample});}
\STATE{Decide return or exploration using Pointer Network;}
\STATE{Sample next POI from revisit distribution or behavior graph $\mathcal{G}_B$;}
\ENDWHILE
\STATE{\bfseries return} {Obtain generated target-city trajectories $\hat{\mathcal{T}}_t$.}
\end{algorithmic}
\end{algorithm}


Besides, we further provide the running time comparison between \method{} and SOTA baselines in training and generating phases in Tables \ref{tab:efficiency} to verify effectiveness of the proposed method.
\begin{table}[t]
\centering
\SMALL
\caption{Training time per epoch (summed over all training stages) and generation time
for 10k trajectories.}
\label{tab:efficiency}
\begin{tabular}{l r rrr c}
\toprule
Transfer & MIRAGE (s) & GTG (s) & COLA (s) & \textbf{Nomad} (s) \\
\midrule
TKY$\to$NYC          & 27.3 & 446.0 & 68.0 & 9.6 \\
NYC$\to$TKY          & 8.9  & 277.0 & 25.8 & 5.4 \\
OSA$\to$NGO          & 3.2  & 86.0  & 9.1  & 3.9 \\
OSA$\to$KYO          & 3.2  & 86.0  & 9.1  & 3.9 \\
OSA$\to$FUK          & 3.2  & 86.0  & 9.1  & 3.9 \\
NGO$\to$OSA          & 1.6  & 41.8 & 4.5  & 3.4 \\
KYO+NGO$\to$OSA      & 2.9  & 79.0  & 11.7 & 3.7 \\
FUK+KYO+NGO$\to$OSA  & 2.7  & 105.0 & 10.3 & 3.9 \\
IST$\to$IZM          & 3.2  & 144.0 & 8.9  & 3.8 \\
IST$\to$ANK          & 3.2  & 144.0 & 8.9  & 3.8 \\
IST$\to$BUR          & 3.2  & 144.0 & 8.9  & 3.8 \\
IZM$\to$IST          & 2.8  & 78.0  & 6.2  & 3.6 \\
ANK+IZM$\to$IST      & 4.2  & 156.0 & 11.8 & 4.1 \\
ANK+BUR+IZM$\to$IST  & 4.9  & 184.0 & 12.1 & 4.2 \\
\midrule
Gen. 10k trajs & 241.7 & 17.0 & 53.0 & \bf13.3 \\
\bottomrule
\end{tabular}
\end{table}
From Table \ref{tab:efficiency}, MIRAGE is one to two seconds faster than our model On the small-scale datasets like Japanese and Turkish clusters. However, on the larger-scale NYC and TKY datasets, \textbf{\method{}} still completes a training epoch in under 10 seconds, whereas MIRAGE's speed drops significantly. 
GTG incurs substantially higher computational cost due to its complex sequence generation process, while MIRAGE and COLA remain slower despite their simpler architectures. The efficiency advantage becomes even more evident during inference. Generating 10k trajectories takes only 13.3 seconds for \textbf{\method{}}, compared with 17.0 seconds for GTG, 53.0 seconds for COLA, and 241.7 seconds for MIRAGE. This improvement mainly comes from flow matching module, which enables one-pass generation without the iterative denoising process required by diffusion-based generators. Moreover, grounding sampled transitions through the behavior graph involves only lightweight nearest-neighbor matching and probability lookup, introducing negligible additional overhead. These results demonstrate that \textbf{\method{}} is not only more accurate but also computationally efficient for large-scale trajectory synthesis.


\section{Supplementary Experiments}

\subsection{Full Results on All Transfers}
\label{app:full}

Tables~\ref{tab:app_metro}, \ref{tab:app_japan}, and \ref{tab:app_turkey} report the results on the ten transfer pairs not shown in the main text, under the same protocol and conventions as Table~\ref{tab:main}. The complete results further reveal that the advantage of \textbf{\method{}} is not driven by a small number of favorable transfer settings. It achieves a positive average gain over the strongest metric-wise baselines on seven of the ten additional pairs, including the geographically and structurally distinct TKY$\to$NYC setting, where the gain reaches 24.43\%. Its performance is particularly stable on the fidelity metrics: among these ten pairs, \textbf{\method{}} obtains the lowest Distance error in nine settings, the lowest Speed error in eight settings, and the lowest Time-Cat error in nine settings. These results show that the generated trajectories consistently preserve realistic travel ranges, movement speeds, and temporal-category patterns after being grounded to previously unseen POIs. The gains are not limited to distribution matching. Regarding TKY$\to$NYC and NGO$\to$OSA, \textbf{\method{}} ranks first across all four downstream metrics, yet on OSA$\to$NGO, OSA$\to$KYO, IST$\to$IZM, and IZM$\to$IST, it achieves the best or second-best result on most utility measures. 

Markov is competitive on prediction in several Japanese and Turkish pairs, COLA and GTG perform well on recommendation, and Gravity often provides a strong OD or Speed match. In contrast, \textbf{\method{}} more frequently maintains competitive utility and fidelity simultaneously rather than optimizing one aspect at the expense of the other. The three negative average gains, namely OSA$\to$FUK, IST$\to$IZM, and IST$\to$BUR, also clarify the remaining difficulty. In these settings, \textbf{\method{}} continues to achieve the best Distance, Speed, and Time-Cat results, but loses ground on one or more downstream or OD-based measures. Similarly, on IST$\to$BUR, MIRAGE performs best for prediction and Gravity for recommendation and OD-Sim, although \textbf{\method{}} still gives the closest distance, speed, and temporal-category distributions. Therefore, the negative aggregate values do not indicate uniformly poor transfer; rather, they arise from a mismatch between accurate marginal mobility statistics and the pair-specific transition or recommendation patterns required by certain downstream tasks. Finally, the comparison between trace IZM$\to$IST and trace ANK+IZM$\to$IST supports the observation in the main text: adding Ankara as an additional source changes most errors only slightly and does not alter the overall ranking pattern. Taken together, the full pairwise results show that \textbf{\method{}} transfers reliably across metropolitan, Japanese, and Turkish city groups, with its most consistent advantage lying in preserving the physical and temporal characteristics of mobility while remaining competitive for downstream use.

\begin{table*}[t]
\centering
\small
\caption{Full results on \textbf{Metropolis} transfer pair. Conventions as in Table~\ref{tab:main}.}
\label{tab:app_metro}
\begin{tabular}{ll cccc cccc c}
\toprule
& \multirow{2}{*}{Method} & \multicolumn{4}{c}{Downstream Utility} & \multicolumn{4}{c}{Distributional Fidelity}  &  \\
\cmidrule(lr){3-6}\cmidrule(lr){7-10}
& & Pred-MAPE\,$\downarrow$ & Pred-MSPE\,$\downarrow$ & Rec-MAPE\,$\downarrow$ & Rec-MSPE\,$\downarrow$ & OD-Sim\,$\downarrow$ & Distance\,$\downarrow$ & Speed\,$\downarrow$ & Time-Cat\,$\downarrow$  & \\
\midrule
\multirow{7}{*}{\rotatebox{90}{TKY$\to$NYC}} & Markov \cite{gambs2012next} & 0.619$\pm$0.071 & 0.400$\pm$0.094 & 0.449$\pm$0.076 & 0.229$\pm$0.071 & 0.585$\pm$0.005 & 0.181$\pm$0.001 & -- & --  & \multirow{7}{*}{\rotatebox{90}{\textcolor{red}{+24.43\%}}} \\
 & DITRAS \cite{pappalardo2018data} & 0.776$\pm$0.083 & 0.641$\pm$0.122 & 0.696$\pm$0.092 & 0.547$\pm$0.121 & 0.473$\pm$0.008 & 0.143$\pm$0.000 & 0.040$\pm$0.000 & 0.284$\pm$0.002  & \\
 & Gravity \cite{barbosa2018human} & 0.647$\pm$0.068 & 0.462$\pm$0.103 & 0.391$\pm$0.062 & 0.183$\pm$0.067 & \uline{0.312$\pm$0.002} & 0.128$\pm$0.002 & \uline{0.029$\pm$0.000} & 0.236$\pm$0.001  & \\
 & MIRAGE \cite{deng2025revisiting} & 0.608$\pm$0.033 & 0.418$\pm$0.042 & 0.445$\pm$0.054 & 0.290$\pm$0.058 & 0.946$\pm$0.038 & \uline{0.083$\pm$0.001} & 0.063$\pm$0.001 & 0.270$\pm$0.002  & \\
 & GTG \cite{wang2025gtg} & \uline{0.575$\pm$0.004} & \uline{0.328$\pm$0.004} & \uline{0.285$\pm$0.037} & \uline{0.088$\pm$0.024} & 0.710$\pm$0.025 & 0.235$\pm$0.002 & 0.121$\pm$0.001 & 0.223$\pm$0.002  & \\
 & COLA \cite{wang2024cola} & 0.762$\pm$0.053 & 0.634$\pm$0.082 & 0.406$\pm$0.033 & 0.184$\pm$0.033 & 0.522$\pm$0.006 & 0.254$\pm$0.002 & 0.212$\pm$0.001 & \uline{0.161$\pm$0.001}  & \\
 & \textbf{\method{}} & \textbf{0.501$\pm$0.005} & \textbf{0.254$\pm$0.005} & \textbf{0.258$\pm$0.011} & \textbf{0.080$\pm$0.015} & \textbf{0.287$\pm$0.003} & \textbf{0.013$\pm$0.001} & \textbf{0.018$\pm$0.002} & \textbf{0.143$\pm$0.001}  & \\
\bottomrule
\end{tabular}
\end{table*}

\begin{table*}[t]
\centering
\small
\caption{Full results on \textbf{Japanese-cluster} transfer pairs. Conventions as in Table~\ref{tab:main}.}
\label{tab:app_japan}
\begin{tabular}{ll cccc cccc c}
\toprule
& \multirow{2}{*}{Method} & \multicolumn{4}{c}{Downstream Utility} & \multicolumn{4}{c}{Distributional Fidelity}  &  \\
\cmidrule(lr){3-6}\cmidrule(lr){7-10}
& & Pred-MAPE\,$\downarrow$ & Pred-MSPE\,$\downarrow$ & Rec-MAPE\,$\downarrow$ & Rec-MSPE\,$\downarrow$ & OD-Sim\,$\downarrow$ & Distance\,$\downarrow$ & Speed\,$\downarrow$ & Time-Cat\,$\downarrow$  & \\
\midrule
\multirow{7}{*}{\rotatebox{90}{OSA$\to$NGO}} & Markov \cite{gambs2012next} & \textbf{0.365$\pm$0.028} & 0.172$\pm$0.020 & \uline{0.233$\pm$0.091} & \textbf{0.103$\pm$0.069} & 0.399$\pm$0.024 & 0.199$\pm$0.003 & -- & --  & \multirow{7}{*}{\rotatebox{90}{\textcolor{red}{+6.24\%}}} \\
 & DITRAS \cite{pappalardo2018data} & \uline{0.382$\pm$0.039} & 0.180$\pm$0.032 & 0.287$\pm$0.108 & 0.155$\pm$0.114 & \uline{0.312$\pm$0.013} & 0.159$\pm$0.003 & 0.058$\pm$0.003 & 0.344$\pm$0.002  & \\
 & Gravity \cite{barbosa2018human} & 0.463$\pm$0.113 & 0.253$\pm$0.113 & 0.449$\pm$0.135 & 0.398$\pm$0.160 & 0.350$\pm$0.013 & 0.182$\pm$0.002 & \uline{0.024$\pm$0.001} & 0.367$\pm$0.001  & \\
 & MIRAGE \cite{deng2025revisiting} & 0.455$\pm$0.030 & 0.233$\pm$0.021 & 0.335$\pm$0.094 & 0.224$\pm$0.153 & 0.837$\pm$0.145 & \textbf{0.056$\pm$0.003} & 0.034$\pm$0.003 & 0.405$\pm$0.008  & \\
 & GTG \cite{wang2025gtg} & 0.398$\pm$0.039 & \uline{0.171$\pm$0.029} & 0.365$\pm$0.020 & 0.395$\pm$0.010 & 0.736$\pm$0.030 & 0.153$\pm$0.003 & 0.165$\pm$0.002 & \uline{0.317$\pm$0.007}  & \\
 & COLA \cite{wang2024cola} & 0.401$\pm$0.040 & 0.186$\pm$0.034 & 0.241$\pm$0.137 & 0.203$\pm$0.205 & 0.477$\pm$0.019 & 0.243$\pm$0.005 & 0.145$\pm$0.004 & 0.332$\pm$0.001  & \\
 & \textbf{\method{}} & 0.382$\pm$0.031 & \textbf{0.167$\pm$0.022} & \textbf{0.170$\pm$0.123} & \uline{0.103$\pm$0.171} & \textbf{0.292$\pm$0.015} & \uline{0.059$\pm$0.002} & \textbf{0.021$\pm$0.000} & \textbf{0.280$\pm$0.003}  & \\
\cmidrule(l){1-11}
\multirow{7}{*}{\rotatebox{90}{OSA$\to$KYO}} & Markov \cite{gambs2012next} & 0.401$\pm$0.063 & 0.219$\pm$0.063 & 0.372$\pm$0.030 & 0.264$\pm$0.030 & 0.473$\pm$0.012 & 0.206$\pm$0.003 & -- & --  & \multirow{7}{*}{\rotatebox{90}{\textcolor{red}{+8.71\%}}} \\
 & DITRAS \cite{pappalardo2018data} & 0.444$\pm$0.038 & 0.248$\pm$0.054 & 0.444$\pm$0.096 & 0.274$\pm$0.111 & \uline{0.401$\pm$0.024} & 0.175$\pm$0.002 & \uline{0.035$\pm$0.002} & \uline{0.280$\pm$0.003}  & \\
 & Gravity \cite{barbosa2018human} & 0.390$\pm$0.046 & \uline{0.177$\pm$0.040} & 0.456$\pm$0.086 & 0.377$\pm$0.125 & 0.420$\pm$0.013 & 0.151$\pm$0.001 & 0.116$\pm$0.001 & 0.352$\pm$0.002  & \\
 & MIRAGE \cite{deng2025revisiting} & \uline{0.386$\pm$0.049} & 0.178$\pm$0.048 & 0.462$\pm$0.113 & 0.308$\pm$0.164 & 0.660$\pm$0.275 & \uline{0.080$\pm$0.005} & 0.044$\pm$0.003 & 0.408$\pm$0.010  & \\
 & GTG \cite{wang2025gtg} & 0.431$\pm$0.015 & 0.232$\pm$0.015 & 0.432$\pm$0.016 & 0.380$\pm$0.037 & 0.462$\pm$0.052 & 0.158$\pm$0.003 & 0.175$\pm$0.002 & 0.290$\pm$0.005  & \\
 & COLA \cite{wang2024cola} & 0.505$\pm$0.089 & 0.308$\pm$0.099 & \textbf{0.355$\pm$0.097} & \textbf{0.247$\pm$0.157} & 0.597$\pm$0.024 & 0.260$\pm$0.004 & 0.105$\pm$0.004 & 0.324$\pm$0.004  & \\
 & \textbf{\method{}} & \textbf{0.374$\pm$0.036} & \textbf{0.172$\pm$0.029} & \uline{0.358$\pm$0.112} & \uline{0.247$\pm$0.164} & \textbf{0.336$\pm$0.017} & \textbf{0.063$\pm$0.004} & \textbf{0.026$\pm$0.001} & \textbf{0.276$\pm$0.003}  & \\
\cmidrule(l){1-11}
\multirow{7}{*}{\rotatebox{90}{OSA$\to$FUK}} & Markov \cite{gambs2012next} & \uline{0.358$\pm$0.093} & \uline{0.178$\pm$0.101} & \textbf{0.558$\pm$0.445} & \textbf{1.533$\pm$1.832} & \uline{0.252$\pm$0.005} & 0.182$\pm$0.004 & -- & --  & \multirow{7}{*}{\rotatebox{90}{\textcolor{blue}{-28.39\%}}} \\
 & DITRAS \cite{pappalardo2018data} & 0.504$\pm$0.059 & 0.343$\pm$0.042 & 0.748$\pm$0.263 & \uline{1.585$\pm$0.815} & \textbf{0.210$\pm$0.006} & 0.161$\pm$0.002 & 0.039$\pm$0.001 & \uline{0.311$\pm$0.008}  & \\
 & Gravity \cite{barbosa2018human} & 0.370$\pm$0.086 & 0.204$\pm$0.067 & \uline{0.615$\pm$0.404} & 1.747$\pm$1.983 & 0.273$\pm$0.017 & 0.184$\pm$0.004 & \uline{0.031$\pm$0.001} & 0.362$\pm$0.005  & \\
 & MIRAGE \cite{deng2025revisiting} & 0.424$\pm$0.078 & 0.228$\pm$0.084 & 0.834$\pm$0.474 & 2.220$\pm$2.287 & 0.832$\pm$0.322 & \uline{0.095$\pm$0.004} & 0.047$\pm$0.002 & 0.474$\pm$0.011  & \\
 & GTG \cite{wang2025gtg} & 0.434$\pm$0.119 & 0.396$\pm$0.241 & 1.043$\pm$0.130 & 4.053$\pm$0.950 & 0.768$\pm$0.025 & 0.134$\pm$0.002 & 0.141$\pm$0.002 & 0.331$\pm$0.011  & \\
 & COLA \cite{wang2024cola} & 0.564$\pm$0.067 & 0.430$\pm$0.080 & 0.734$\pm$0.387 & 2.204$\pm$1.755 & 0.270$\pm$0.006 & 0.221$\pm$0.005 & 0.100$\pm$0.003 & 0.337$\pm$0.004  & \\
 & \textbf{\method{}} & \textbf{0.339$\pm$0.081} & \textbf{0.169$\pm$0.088} & 0.794$\pm$0.455 & 2.643$\pm$2.076 & 0.541$\pm$0.012 & \textbf{0.080$\pm$0.004} & \textbf{0.027$\pm$0.002} & \textbf{0.292$\pm$0.003}  & \\
\cmidrule(l){1-11}
\multirow{7}{*}{\rotatebox{90}{NGO$\to$OSA}} & Markov \cite{gambs2012next} & 0.500$\pm$0.070 & 0.281$\pm$0.071 & 0.261$\pm$0.085 & \uline{0.116$\pm$0.072} & 0.465$\pm$0.009 & 0.215$\pm$0.002 & -- & --  & \multirow{7}{*}{\rotatebox{90}{\textcolor{red}{+15.23\%}}} \\
 & DITRAS \cite{pappalardo2018data} & 0.477$\pm$0.064 & 0.262$\pm$0.074 & 0.369$\pm$0.196 & 0.233$\pm$0.206 & 0.340$\pm$0.013 & 0.172$\pm$0.001 & 0.059$\pm$0.002 & \uline{0.215$\pm$0.003}  & \\
 & Gravity \cite{barbosa2018human} & 0.469$\pm$0.037 & 0.288$\pm$0.061 & 0.400$\pm$0.059 & 0.243$\pm$0.053 & \uline{0.338$\pm$0.011} & \uline{0.148$\pm$0.003} & \textbf{0.024$\pm$0.001} & 0.306$\pm$0.001  & \\
 & MIRAGE \cite{deng2025revisiting} & 0.379$\pm$0.046 & 0.175$\pm$0.035 & 0.418$\pm$0.080 & 0.301$\pm$0.075 & 0.589$\pm$0.198 & 0.160$\pm$0.001 & 0.034$\pm$0.002 & 0.271$\pm$0.005  & \\
 & GTG \cite{wang2025gtg} & 0.422$\pm$0.005 & 0.225$\pm$0.003 & \uline{0.257$\pm$0.023} & 0.137$\pm$0.011 & 0.662$\pm$0.014 & 0.176$\pm$0.001 & 0.187$\pm$0.001 & 0.256$\pm$0.004  & \\
 & COLA \cite{wang2024cola} & \uline{0.374$\pm$0.028} & \uline{0.167$\pm$0.014} & 0.269$\pm$0.026 & 0.135$\pm$0.007 & 0.484$\pm$0.015 & 0.257$\pm$0.002 & 0.131$\pm$0.001 & 0.267$\pm$0.001  & \\
 & \textbf{\method{}} & \textbf{0.372$\pm$0.037} & \textbf{0.161$\pm$0.039} & \textbf{0.201$\pm$0.052} & \textbf{0.072$\pm$0.055} & \textbf{0.285$\pm$0.007} & \textbf{0.071$\pm$0.001} & \uline{0.027$\pm$0.001} & \textbf{0.209$\pm$0.002}  & \\
\bottomrule
\end{tabular}
\end{table*}

\begin{table*}[t]
\centering
\small
\caption{Full results on \textbf{Turkish-cluster} transfer pairs. Conventions as in Table~\ref{tab:main}.}
\label{tab:app_turkey}
\begin{tabular}{ll cccc cccc c}
\toprule
& \multirow{2}{*}{Method} & \multicolumn{4}{c}{Downstream Utility} & \multicolumn{4}{c}{Distributional Fidelity}  &  \\
\cmidrule(lr){3-6}\cmidrule(lr){7-10}
& & Pred-MAPE\,$\downarrow$ & Pred-MSPE\,$\downarrow$ & Rec-MAPE\,$\downarrow$ & Rec-MSPE\,$\downarrow$ & OD-Sim\,$\downarrow$ & Distance\,$\downarrow$ & Speed\,$\downarrow$ & Time-Cat\,$\downarrow$  & \\
\midrule
\multirow{7}{*}{\rotatebox{90}{IST$\to$IZM}} & Markov \cite{gambs2012next} & \textbf{0.242$\pm$0.055} & \uline{0.081$\pm$0.041} & 0.316$\pm$0.177 & 0.216$\pm$0.147 & 0.502$\pm$0.010 & 0.162$\pm$0.002 & -- & --  & \multirow{7}{*}{\rotatebox{90}{\textcolor{blue}{-10.84\%}}} \\
 & DITRAS \cite{pappalardo2018data} & 0.319$\pm$0.066 & 0.138$\pm$0.032 & 0.375$\pm$0.046 & 0.238$\pm$0.043 & 0.408$\pm$0.012 & 0.127$\pm$0.002 & 0.062$\pm$0.003 & \uline{0.257$\pm$0.002}  & \\
 & Gravity \cite{barbosa2018human} & 0.291$\pm$0.088 & 0.123$\pm$0.067 & 0.242$\pm$0.166 & 0.123$\pm$0.129 & \textbf{0.334$\pm$0.003} & \uline{0.114$\pm$0.003} & \textbf{0.004$\pm$0.000} & 0.274$\pm$0.002  & \\
 & MIRAGE \cite{deng2025revisiting} & 0.434$\pm$0.029 & 0.229$\pm$0.028 & 0.322$\pm$0.133 & 0.224$\pm$0.122 & 0.976$\pm$0.033 & 0.146$\pm$0.003 & 0.063$\pm$0.003 & 0.380$\pm$0.009  & \\
 & GTG \cite{wang2025gtg} & 0.329$\pm$0.030 & 0.151$\pm$0.033 & 0.178$\pm$0.030 & 0.112$\pm$0.009 & 0.633$\pm$0.031 & 0.311$\pm$0.003 & 0.149$\pm$0.003 & 0.282$\pm$0.003  & \\
 & COLA \cite{wang2024cola} & 0.277$\pm$0.033 & 0.091$\pm$0.018 & \uline{0.151$\pm$0.076} & \uline{0.043$\pm$0.041} & 0.618$\pm$0.009 & 0.214$\pm$0.003 & 0.194$\pm$0.003 & 0.268$\pm$0.004  & \\
 & \textbf{\method{}} & \uline{0.246$\pm$0.030} & \textbf{0.074$\pm$0.021} & \textbf{0.105$\pm$0.043} & \textbf{0.023$\pm$0.019} & \uline{0.365$\pm$0.010} & \textbf{0.049$\pm$0.001} & \uline{0.013$\pm$0.001} & \textbf{0.240$\pm$0.002}  & \\
\cmidrule(l){1-11}
\multirow{7}{*}{\rotatebox{90}{IST$\to$ANK}} & Markov \cite{gambs2012next} & 0.352$\pm$0.043 & 0.176$\pm$0.032 & 0.362$\pm$0.110 & 0.226$\pm$0.106 & 0.623$\pm$0.011 & 0.161$\pm$0.002 & -- & --  & \multirow{7}{*}{\rotatebox{90}{\textcolor{red}{+7.60\%}}} \\
 & DITRAS \cite{pappalardo2018data} & 0.367$\pm$0.056 & 0.191$\pm$0.067 & 0.413$\pm$0.063 & 0.268$\pm$0.059 & 0.531$\pm$0.010 & 0.133$\pm$0.003 & 0.052$\pm$0.001 & 0.261$\pm$0.005  & \\
 & Gravity \cite{barbosa2018human} & 0.245$\pm$0.062 & 0.097$\pm$0.045 & \uline{0.235$\pm$0.106} & \uline{0.139$\pm$0.070} & \uline{0.472$\pm$0.015} & \uline{0.111$\pm$0.002} & \uline{0.015$\pm$0.004} & 0.263$\pm$0.005  & \\
 & MIRAGE \cite{deng2025revisiting} & 0.370$\pm$0.116 & 0.226$\pm$0.117 & 0.408$\pm$0.126 & 0.301$\pm$0.115 & 0.658$\pm$0.056 & 0.124$\pm$0.003 & 0.057$\pm$0.002 & 0.381$\pm$0.006  & \\
 & GTG \cite{wang2025gtg} & 0.308$\pm$0.016 & 0.119$\pm$0.006 & 0.287$\pm$0.007 & \textbf{0.125$\pm$0.009} & 0.712$\pm$0.022 & 0.297$\pm$0.005 & 0.146$\pm$0.002 & 0.283$\pm$0.007  & \\
 & COLA \cite{wang2024cola} & \uline{0.244$\pm$0.046} & \uline{0.080$\pm$0.037} & \textbf{0.224$\pm$0.054} & 0.146$\pm$0.039 & 0.692$\pm$0.013 & 0.196$\pm$0.001 & 0.185$\pm$0.002 & \uline{0.259$\pm$0.002}  & \\
 & \textbf{\method{}} & \textbf{0.235$\pm$0.029} & \textbf{0.079$\pm$0.018} & 0.251$\pm$0.065 & 0.140$\pm$0.061 & \textbf{0.408$\pm$0.018} & \textbf{0.052$\pm$0.002} & \textbf{0.014$\pm$0.001} & \textbf{0.242$\pm$0.002}  & \\
\cmidrule(l){1-11}
\multirow{7}{*}{\rotatebox{90}{IST$\to$BUR}} & Markov \cite{gambs2012next} & 0.384$\pm$0.111 & 0.143$\pm$0.107 & 0.499$\pm$0.141 & \uline{0.366$\pm$0.139} & 0.524$\pm$0.011 & 0.128$\pm$0.002 & -- & --  & \multirow{7}{*}{\rotatebox{90}{\textcolor{blue}{-16.10\%}}} \\
 & DITRAS \cite{pappalardo2018data} & 0.386$\pm$0.091 & \uline{0.141$\pm$0.098} & 0.564$\pm$0.236 & 0.710$\pm$0.442 & 0.420$\pm$0.016 & \uline{0.103$\pm$0.004} & 0.046$\pm$0.001 & 0.342$\pm$0.006  & \\
 & Gravity \cite{barbosa2018human} & 0.329$\pm$0.080 & 0.196$\pm$0.079 & \textbf{0.389$\pm$0.130} & \textbf{0.328$\pm$0.199} & \textbf{0.387$\pm$0.007} & 0.108$\pm$0.003 & \uline{0.019$\pm$0.000} & 0.352$\pm$0.006  & \\
 & MIRAGE \cite{deng2025revisiting} & \textbf{0.298$\pm$0.060} & \textbf{0.134$\pm$0.044} & 0.626$\pm$0.120 & 0.827$\pm$0.273 & 0.987$\pm$0.043 & 0.137$\pm$0.002 & 0.064$\pm$0.004 & 0.509$\pm$0.004  & \\
 & GTG \cite{wang2025gtg} & \uline{0.305$\pm$0.025} & 0.157$\pm$0.016 & 0.542$\pm$0.083 & 0.713$\pm$0.224 & 0.776$\pm$0.025 & 0.291$\pm$0.008 & 0.146$\pm$0.003 & 0.369$\pm$0.005  & \\
 & COLA \cite{wang2024cola} & 0.400$\pm$0.149 & 0.224$\pm$0.146 & \uline{0.432$\pm$0.224} & 0.526$\pm$0.408 & 0.639$\pm$0.013 & 0.179$\pm$0.002 & 0.154$\pm$0.004 & \uline{0.336$\pm$0.005}  & \\
 & \textbf{\method{}} & 0.321$\pm$0.035 & 0.148$\pm$0.038 & 0.527$\pm$0.238 & 0.749$\pm$0.421 & \uline{0.406$\pm$0.011} & \textbf{0.051$\pm$0.004} & \textbf{0.018$\pm$0.002} & \textbf{0.328$\pm$0.004}  & \\
\cmidrule(l){1-11}
\multirow{7}{*}{\rotatebox{90}{IZM$\to$IST}} & Markov \cite{gambs2012next} & 0.226$\pm$0.089 & 0.097$\pm$0.060 & 0.205$\pm$0.101 & 0.097$\pm$0.084 & 0.628$\pm$0.014 & 0.158$\pm$0.002 & -- & --  & \multirow{7}{*}{\rotatebox{90}{\textcolor{red}{+10.00\%}}} \\
 & DITRAS \cite{pappalardo2018data} & 0.259$\pm$0.051 & 0.099$\pm$0.033 & 0.258$\pm$0.122 & 0.131$\pm$0.109 & 0.557$\pm$0.019 & 0.133$\pm$0.001 & 0.046$\pm$0.001 & \uline{0.225$\pm$0.001}  & \\
 & Gravity \cite{barbosa2018human} & 0.310$\pm$0.077 & 0.134$\pm$0.044 & 0.353$\pm$0.200 & 0.220$\pm$0.201 & \uline{0.514$\pm$0.008} & 0.091$\pm$0.002 & \uline{0.020$\pm$0.000} & 0.238$\pm$0.002  & \\
 & MIRAGE \cite{deng2025revisiting} & 0.335$\pm$0.094 & 0.205$\pm$0.100 & 0.411$\pm$0.101 & 0.341$\pm$0.113 & 0.740$\pm$0.144 & \uline{0.071$\pm$0.001} & 0.027$\pm$0.001 & 0.320$\pm$0.008  & \\
 & GTG \cite{wang2025gtg} & \uline{0.217$\pm$0.011} & \uline{0.079$\pm$0.004} & 0.135$\pm$0.036 & 0.034$\pm$0.015 & 0.681$\pm$0.018 & 0.316$\pm$0.002 & 0.163$\pm$0.002 & 0.234$\pm$0.004  & \\
 & COLA \cite{wang2024cola} & 0.255$\pm$0.060 & 0.089$\pm$0.038 & \textbf{0.118$\pm$0.047} & \textbf{0.027$\pm$0.020} & 0.838$\pm$0.006 & 0.266$\pm$0.002 & 0.207$\pm$0.002 & 0.228$\pm$0.002  & \\
 & \textbf{\method{}} & \textbf{0.196$\pm$0.032} & \textbf{0.058$\pm$0.018} & \uline{0.134$\pm$0.062} & \uline{0.031$\pm$0.022} & \textbf{0.451$\pm$0.014} & \textbf{0.051$\pm$0.001} & \textbf{0.015$\pm$0.001} & \textbf{0.210$\pm$0.001}  & \\
\cmidrule(l){1-11}
\multirow{7}{*}{\rotatebox{90}{ANK+IZM$\to$IST}} & Markov \cite{gambs2012next} & 0.292$\pm$0.087 & 0.137$\pm$0.057 & 0.315$\pm$0.120 & 0.186$\pm$0.084 & 0.658$\pm$0.015 & 0.166$\pm$0.001 & -- & --  & \multirow{7}{*}{\rotatebox{90}{\textcolor{red}{+16.97\%}}} \\
 & DITRAS \cite{pappalardo2018data} & 0.338$\pm$0.086 & 0.164$\pm$0.065 & 0.278$\pm$0.182 & 0.156$\pm$0.156 & 0.565$\pm$0.011 & 0.135$\pm$0.001 & 0.048$\pm$0.001 & \textbf{0.213$\pm$0.002}  & \\
 & Gravity \cite{barbosa2018human} & 0.346$\pm$0.058 & 0.157$\pm$0.053 & 0.207$\pm$0.112 & 0.103$\pm$0.089 & \uline{0.507$\pm$0.008} & \uline{0.089$\pm$0.002} & \uline{0.026$\pm$0.000} & 0.238$\pm$0.003  & \\
 & MIRAGE \cite{deng2025revisiting} & 0.277$\pm$0.113 & 0.116$\pm$0.057 & 0.290$\pm$0.110 & 0.201$\pm$0.105 & 0.784$\pm$0.094 & 0.120$\pm$0.001 & 0.036$\pm$0.001 & 0.299$\pm$0.002  & \\
 & GTG \cite{wang2025gtg} & \uline{0.207$\pm$0.020} & \uline{0.077$\pm$0.005} & 0.200$\pm$0.044 & \uline{0.050$\pm$0.016} & 0.674$\pm$0.014 & 0.317$\pm$0.003 & 0.163$\pm$0.002 & 0.237$\pm$0.008  & \\
 & COLA \cite{wang2024cola} & 0.240$\pm$0.035 & 0.082$\pm$0.025 & \uline{0.183$\pm$0.108} & 0.072$\pm$0.080 & 0.830$\pm$0.017 & 0.266$\pm$0.003 & 0.207$\pm$0.002 & 0.226$\pm$0.003  & \\
 & \textbf{\method{}} & \textbf{0.205$\pm$0.018} & \textbf{0.064$\pm$0.010} & \textbf{0.172$\pm$0.063} & \textbf{0.049$\pm$0.032} & \textbf{0.459$\pm$0.010} & \textbf{0.045$\pm$0.001} & \textbf{0.012$\pm$0.001} & \uline{0.219$\pm$0.002}  & \\
\bottomrule
\end{tabular}
\end{table*}

\subsection{Parameter Analysis}
The supplemented sensitivity analysis further provides the impacts of three coefficients in \textbf{\method{}} regulate complementary components of the framework. Specifically, $\lambda_{\mathrm{cat}}$ weights the category prediction loss for cross-city semantic alignment, $\lambda_{\mathrm{link}}$ controls the link reconstruction loss that preserves the structural relationships among POIs, and $\lambda_{\mathrm{gate}}$ balances the return gate and revisit prediction objectives in the Pointer Network. Across both transfer settings, competitive performance is maintained over a broad range of parameter combinations rather than being confined to a single sharp optimum, indicating that \textbf{\method{}} is not highly sensitive to fine-grained tuning. Noticeable degradation mainly occurs when either $\lambda_{\mathrm{cat}}$ or $\lambda_{\mathrm{link}}$ becomes overly dominant, as excessive emphasis on category semantics or graph structure can weaken the balance required for learning transferable POI representations. In comparison, varying $\lambda_{\mathrm{gate}}$ causes only moderate changes in the error landscape, suggesting that the return-and-exploration mechanism remains stable under different relative weights. The similar response patterns on {ANK+BUR+IZM$\to$IST} and {KYO+NGO$\to$OSA} further confirm that the selected hyper-parameters generalize across different city groups rather than being tailored to a particular transfer setting.

\begin{figure*}[htbp]
\centering
\vspace{0.1in}
\subfigtopskip=-1pt 
\subfigcapskip=-5pt 
\subfigure[$\lambda_{gate}=0.25$]{
	\label{l_1} 
	\includegraphics[width=0.23\linewidth]{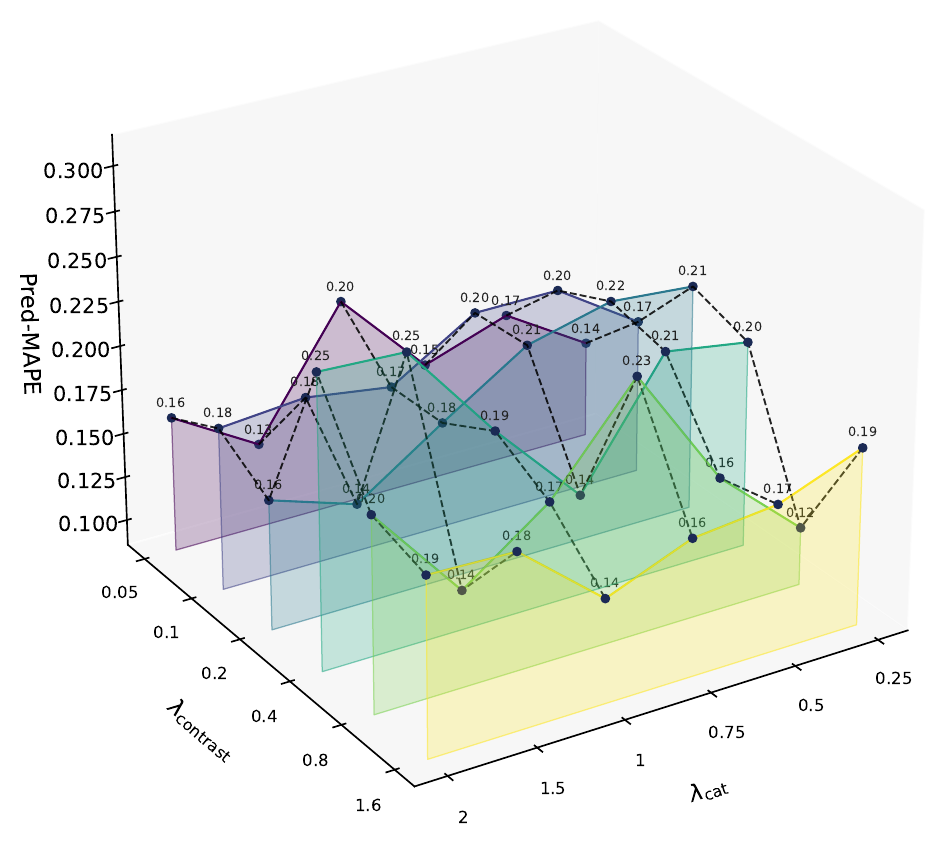}}
\hspace{0in}
\subfigure[$\lambda_{gate}=1$]{
	\label{l_2}
	\includegraphics[width=0.23\linewidth]{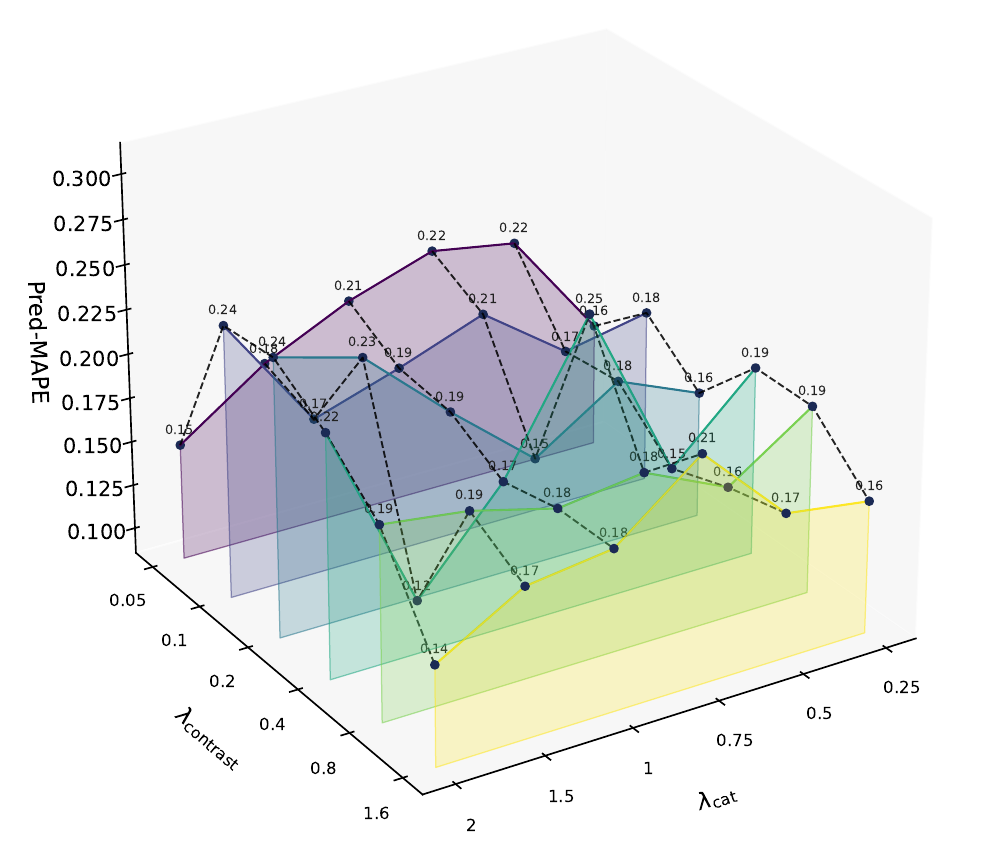}}
\hspace{0in}
\subfigure[$\lambda_{gate}=4$]{
	\label{l_3}
	\includegraphics[width=0.23\linewidth]{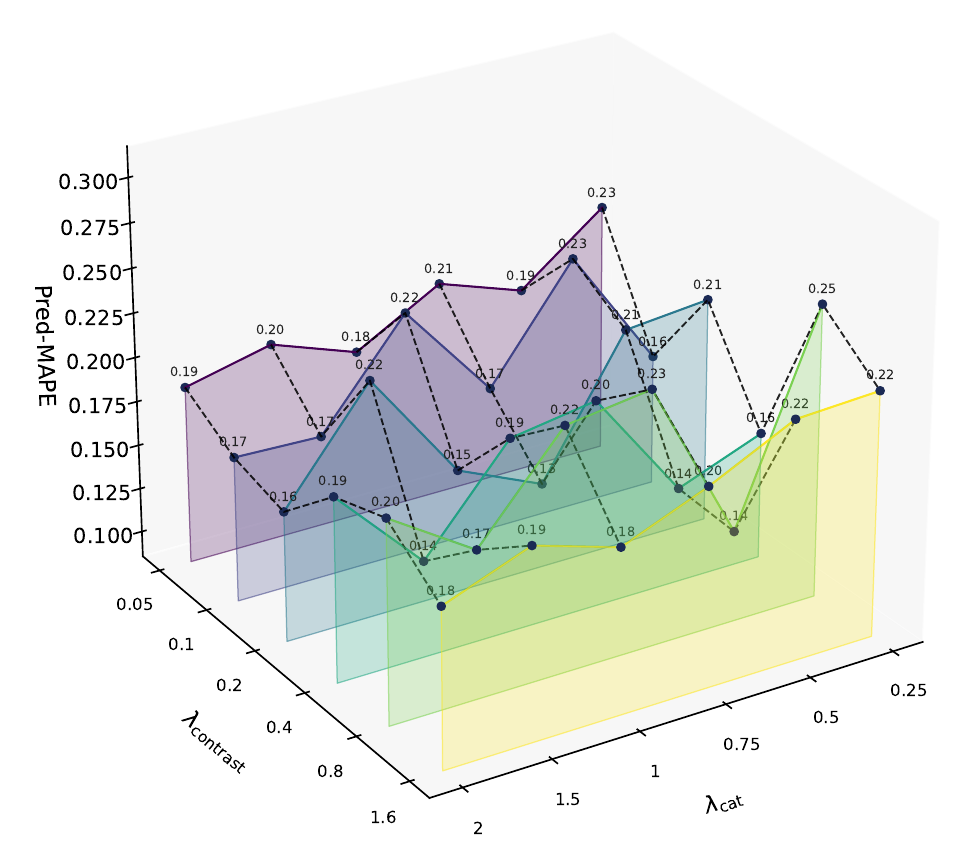}}
\hspace{0in}
\subfigure[$\lambda_{gate}=8$]{
	\label{l_3}
	\includegraphics[width=0.23\linewidth]{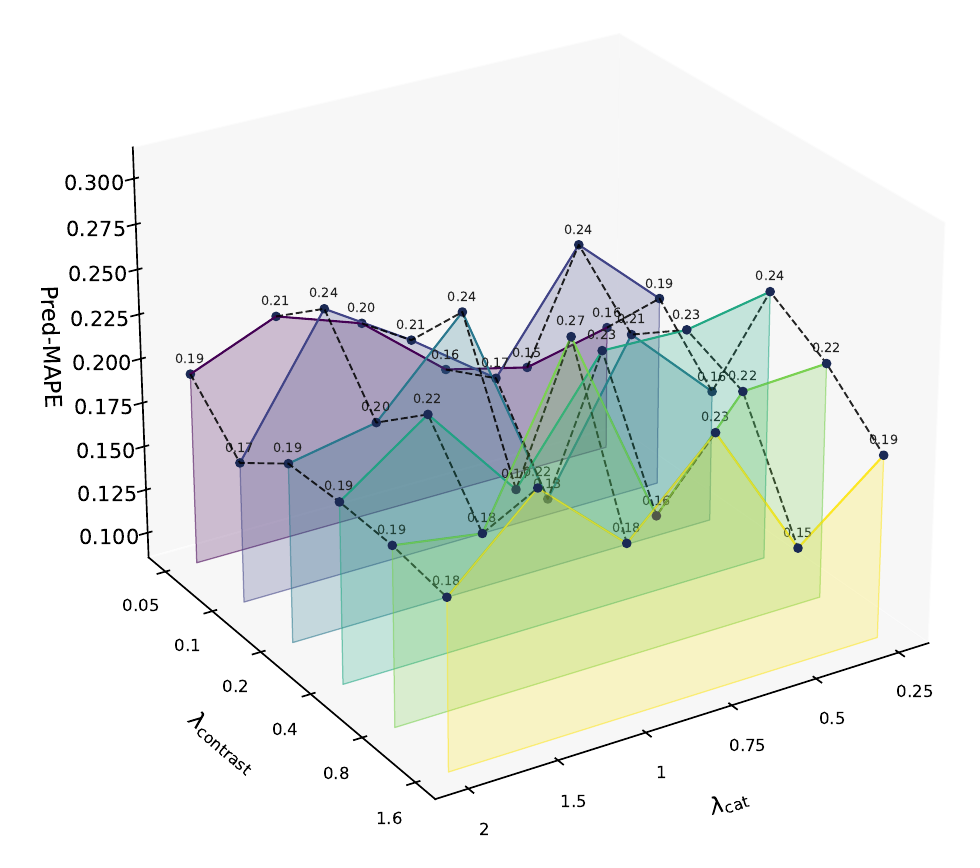}}
\caption{Hyper-parameter sensitivity on \{ANK+BUR+IZM$\to$IST\}.}
\label{parameters1} 
\end{figure*}

\begin{figure*}[htbp]
\centering
\vspace{0.1in}
\subfigtopskip=-1pt 
\subfigcapskip=-5pt 
\subfigure[$\lambda_{gate}=0.25$]{
	\label{l_1} 
	\includegraphics[width=0.23\linewidth]{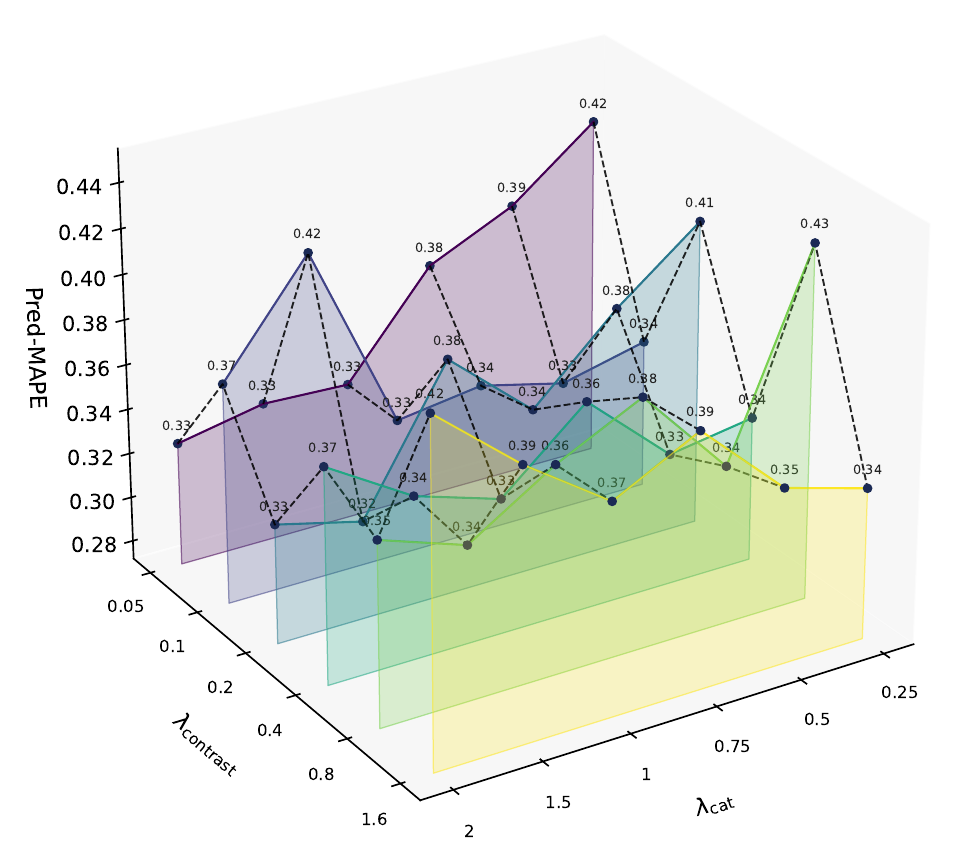}}
\hspace{0in}
\subfigure[$\lambda_{gate}=1$]{
	\label{l_2}
	\includegraphics[width=0.23\linewidth]{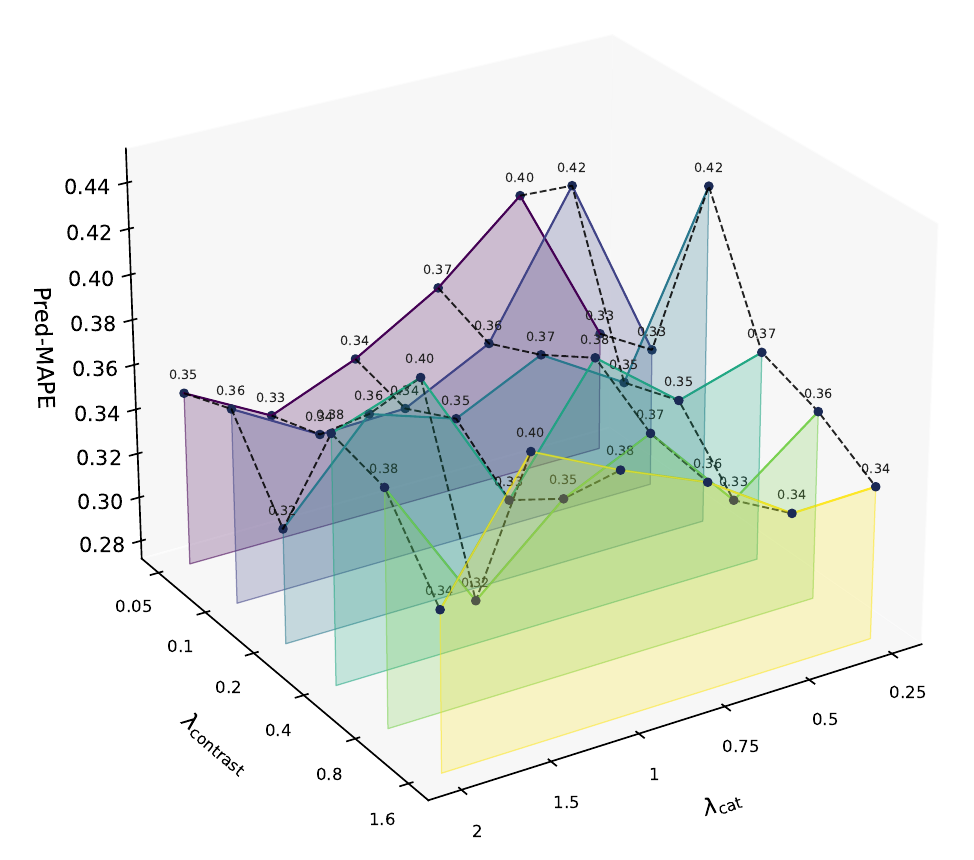}}
\hspace{0in}
\subfigure[$\lambda_{gate}=4$]{
	\label{l_3}
	\includegraphics[width=0.23\linewidth]{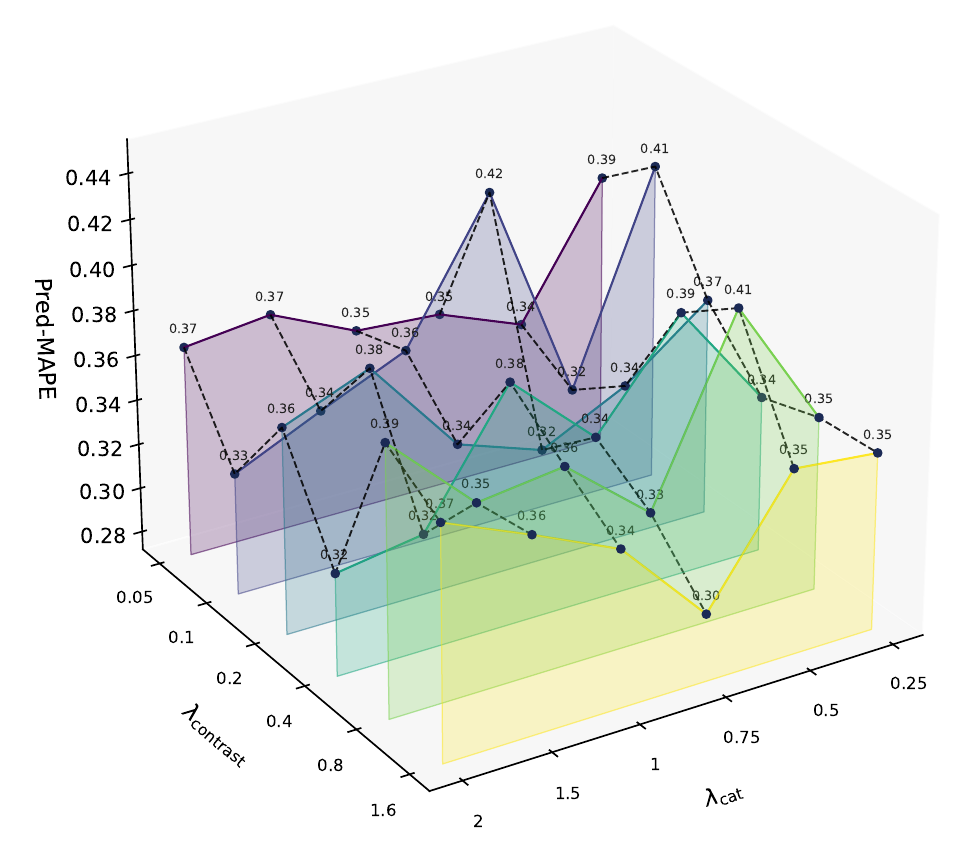}}
\hspace{0in}
\subfigure[$\lambda_{gate}=8$]{
	\label{l_3}
	\includegraphics[width=0.23\linewidth]{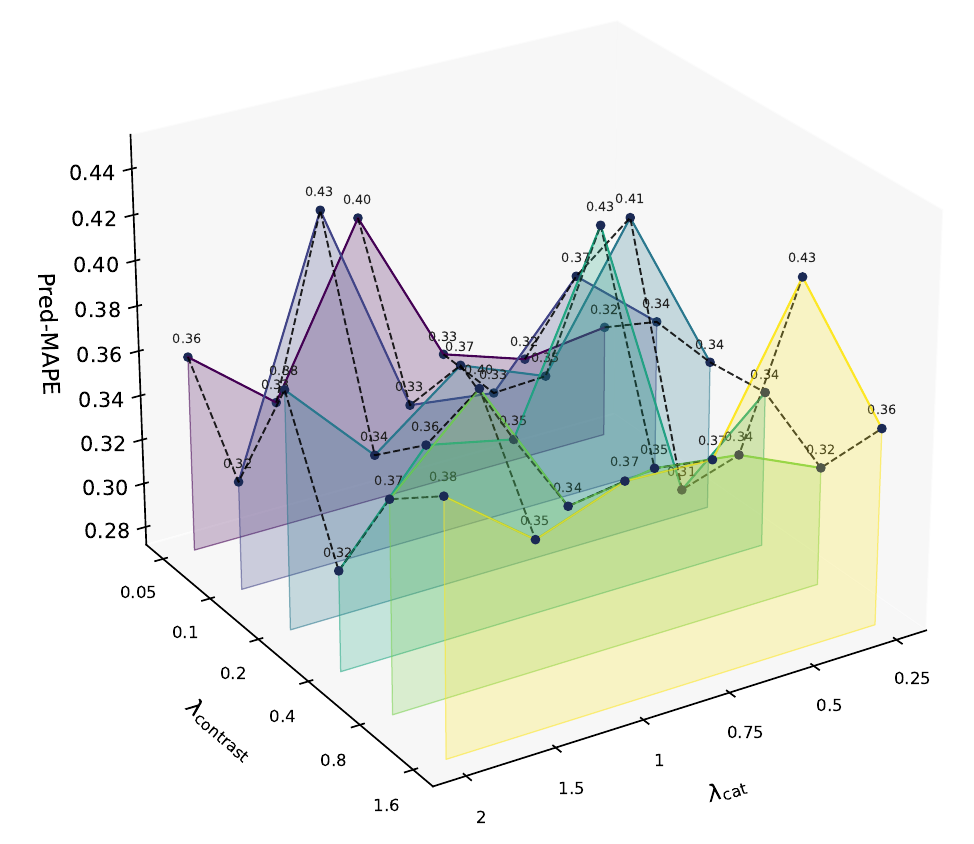}}
\caption{Hyper-parameter sensitivity on \{KYO+NGO$\to$OSA\}.}
\label{parameters1} 
\end{figure*}


\end{document}